\newcommand{\neural}[1][]{
\ifthenelse{\isempty{#1}}
{\mathbf{f}}
{\mathbf{f}^{(#1)}}}
\newcommand{\Jac}[1][]{
\ifthenelse{\isempty{#1}}
{\mathbf{J}}
{\mathbf{J}^{(#1)}}}
\newcommand{\J}[1][]{
\ifthenelse{\isempty{#1}}
{\boldsymbol{J}}
{\boldsymbol{J}^{(#1)}}}
\newcommand{\hJ}[1][]{
\ifthenelse{\isempty{#1}}
{\widehat{\boldsymbol{J}}}
{\boldsymbol{J}'^{(#1)}}}
\newcommand{\kernel}[1][]{\mathbf{H}^{(#1)}}
\newcommand{\kernelinf}[1][k]{\mathbf{H}_{\infty}^{(#1)}}
\newcommand{\pkernel}[1][]{\mathbf{H}^{\perp(#1)}}
\newcommand{\asyker}[1][]{\mathbf{\Lambda}^{(#1)}}
\newcommand{\pasyker}[1][]{\mathbf{\Lambda}^{\perp(#1)}}
\newcommand{\labels}{\mathbf{Y}}
\newcommand{\dset}{\mathcal{D}}
\newcommand*{\diff}{\mathop{}\!\mathrm{d}}
\newcommand{\x}{\mathbf{x}}
\newcommand{\y}{\mathbf{y}}
\newcommand{\X}{\mathbf{X}}
\newcommand{\Y}{\mathbf{Y}}
\newcommand{\Z}{\mathbf{Z}}
\newcommand{\hy}{\hat{\mathbf{y}}}
\newcommand{\veca}{\mathbf{a}}
\newcommand{\vecd}{\mathbf{d}}
\newcommand{\V}[1][]{
\ifthenelse{\isempty{#1}}
{\mathbf{V}}
{\mathbf{V}^{(#1)}}}
\newcommand{\vecv}[1][]{
\ifthenelse{\isempty{#1}}
{\mathbf{v}}
{\mathbf{v}^{(#1)}}}
\newcommand{\prob}{\mathbb{P}}
\newcommand{\qw}[1][]{\mathbf{w}}
\newcommand{\epsidx}[1][]{
\ifthenelse{\isempty{#1}}
{\varepsilon^{(t)}}
{\varepsilon^{(t)}_{#1}}}
\newcommand{\res}[1][]{  
\ifthenelse{\isempty{#1}}
{\mathbf{R}}
{\mathbf{R}^{(#1)}}}
\newcommand{\w}[1][]{
\ifthenelse{\isempty{#1}}
{\mathbf{w}}
{\mathbf{w}^{(#1)}}}
\newcommand{\tw}[1][]{
\ifthenelse{\isempty{#1}}
{\widetilde{\mathbf{w}}}
{\widetilde{\mathbf{w}}^{(#1)}}}
\newcommand{\argmin}{\mathop{\mathrm{argmin}}\limits} 
\newcommand{\cset}{\mathcal{C}}
\newcommand{\erf}{\textrm{erf}}
\newcommand{\ind}[1][]{
\ifthenelse{\isempty{#1}}
{\mathbbm{1}}
{\mathbbm{1} \left(#1\right) }}
\newcommand{\indi}[1][]{
\ifthenelse{\isempty{#1}}
{\mathbbm{1}}
{\mathbbm{1}^{(#1)}}}
\newcommand{\expect}[1][]{
\ifthenelse{\isempty{#1}}
{\mathbb{E}}
{\mathbb{E}\left[#1\right]}}
\newcommand{\finnerprod}[2]
{\left\langle #1,\,#2 \right\rangle_{\textrm{F}}}
\newcommand{\I}{\mathcal{I}}
\newcommand{\normsq}[1]
{\big\| #1\big\|_2^2}
\newcommand{\order}[1]{O\left( #1 \right)}
\newcommand{\RN}[1]{%
  \textup{\uppercase\expandafter{\romannumeral#1}}%
}
\newcommand{\pdist}[1][m]{\mathcal{P}_{m}}
\newcommand{\p}[1][]{
\ifthenelse{\isempty{#1}}
{\boldsymbol{p}}
{\boldsymbol{p}^{(#1)}}
}
\newcommand{\vecr}[1][]{
\ifthenelse{\isempty{#1}}
{\mathbf{r}}
{\mathbf{r}^{(#1)}}}
\newcommand{\sset}{\mathcal{S}}
\newcommand{\cirone}
{\text{\ding{172}}}
\newcommand{\cirtwo}
{\text{\ding{173}}}
\newcommand{\cirthree}
{\text{\ding{174}}}
\newcommand{\cirfour}
{\text{\ding{175}}}
\newcommand{\cirfive}
{\text{\ding{176}}}
\newcommand{\intab}[2][0.75]{
\scalebox{#1}{\textrm{#2}}
}
\newenvironment{proof}[1][]{
\ifthenelse{\isempty{#1}}
{\par\vspace*{-1mm}\noindent\textit{Proof.} }
{\par\vspace*{-2mm}\noindent\textit{Proof of #1.} }}
{\hfill$\square$}
\newtheorem{Theorem}{Theorem}
\newtheorem{Lemma}{Lemma}
\newtheorem{Remark}{Remark}
\newtheorem{Assumption}{Assumption}
\newcommand{\highlight}[1]{\vspace{1mm}\noindent{}\textbf{#1}}
\newcommand{\papertheorem}[1]{Theorem~\ref{#1}}
\newcommand{\paperassumption}[1]{Assumption~\ref{#1}}
\newcommand{\paperlemma}[1]{Lemma~\ref{#1}}
\newcommand{\paperfig}[1]{Figure~\ref{#1}}
\newcommand{\papertab}[1]{Table~\ref{#1}}
\newcommand{\paperappendix}[1]{Appendix~\ref{#1}}
\newcommand{\zoom}[2][0.75]{  
\scalebox{#1}{#2}
}
\newcommand\doublerule{\toprule\specialrule{\heavyrulewidth}{\doublerulesep}{0.95em}}
\newcommand{\revKY}[1]{\textcolor[rgb]{0.04,0.5,1.00}{#1}}
\newcommand{\revKY}[1]{#1}
\icmltitlerunning{Neural Tangent Kernel Empowered Federated Learning}
\begin{document}

\twocolumn[
\icmltitle{Neural Tangent Kernel Empowered Federated Learning}




\begin{icmlauthorlist}
\icmlauthor{Kai Yue}{ncsu}
\icmlauthor{Richeng Jin}{ncsu}
\icmlauthor{Ryan Pilgrim}{scholar}
\icmlauthor{Chau-Wai Wong}{ncsu}
\icmlauthor{Dror Baron}{ncsu}
\icmlauthor{Huaiyu Dai}{ncsu}
\end{icmlauthorlist}

\icmlaffiliation{ncsu}{NC State University}
\icmlaffiliation{scholar}{Independent Scholar}

\icmlcorrespondingauthor{Chau-Wai Wong}{chauwai.wong@ncsu.edu}

\icmlkeywords{Machine Learning, ICML}
\vskip 0.3in
]



\printAffiliationsAndNotice{}  

\begin{abstract}
Federated learning (FL) is a privacy-preserving paradigm where multiple participants jointly solve a machine learning problem without sharing raw data. 
Unlike traditional distributed learning, a unique characteristic of FL is statistical heterogeneity, namely, data distributions across participants are different from each other. 
Meanwhile, recent advances in the interpretation of neural networks have seen a wide use of neural tangent kernels (NTKs) for convergence analyses.
In this paper, we propose a novel FL paradigm empowered by the NTK framework.
The paradigm addresses the challenge of statistical heterogeneity by transmitting update data that are more expressive than those of the conventional FL paradigms.
Specifically, sample-wise Jacobian matrices, rather than model weights/gradients, are uploaded by participants.
The server then constructs an empirical kernel matrix to update a global model without explicitly performing gradient descent.
We further develop a variant with improved communication efficiency and enhanced privacy.    
Numerical results show that the proposed paradigm can achieve the same accuracy while reducing the number of communication rounds by an order of magnitude compared to federated averaging. 
\end{abstract}

\section{Introduction}
Federated learning (FL) has emerged as a popular paradigm involving a large number of clients collaboratively solving a machine learning problem~\citep{kairouz2021advances}.  
In a typical FL framework, a server broadcasts a global model to selected clients and collects model updates without accessing the raw data.  
One popular algorithm is known as \emph{federated averaging} (FedAvg)~\citep{mcmahan2017communication}, in which clients perform stochastic gradient descent (SGD) to update the local models and then upload the weight vectors to the server.
A new global model is constructed on the server by averaging the received weight vectors.

\cite{li2020flsurvey} characterized some unique challenges for FL. 
First, client data are generated locally and remain decentralized, which implies that they may not be independent and identically distributed~(IID). 
Prior work has shown that statistical heterogeneity can negatively affect the convergence of FedAvg~\citep{zhao2018federated}. 
This phenomenon may be explained by noting that local updating under data heterogeneity will cause cost-function inconsistency~\citep{wang2020tackling}.
A more challenging issue is the system heterogeneity, including the diversity of client hardware, battery power, and network connectivity. 
Local updating schemes often exacerbate the straggler issue caused by heterogeneous system characteristics~\citep{li2020flsurvey}.

Recent studies have proposed various strategies to alleviate the statistical heterogeneity. 
One possible solution is to share a globally available dataset with participants to reduce the distance between client-data distributions and the population distribution~\citep{zhao2018federated}.
In practice, though, such a dataset may be unavailable or too small to meaningfully compensate for the heterogeneity.  
Some researchers replaced the coordinate-wise weight averaging strategy in FedAvg with nonlinear aggregation schemes~\citep{ wang2020federated, chen2021fedbe}.
The nonlinear aggregation relies on a separate optimization routine, which can be elusive, especially when the federated model does not perform well.  
Another direction is to modify the local objectives or local update schemes to cancel the effects of client drift~\citep{li2020federated, karimireddy2020scaffold}. 
However, some studies reported that these methods are not consistently effective when evaluated in various settings~\citep{reddi2021adaptive, haddadpour2021federated, chen2021fedbe}. 

In this work, we present a neural tangent kernel empowered federated learning (NTK-FL) paradigm.  
NTK-FL outperforms state-of-the-art methods by achieving the target accuracy with fewer communication rounds.   
We summarize our contributions as follows.
\begin{itemize}[leftmargin=*]
    \item We propose a novel FL paradigm without requiring clients to perform local gradient descent. 
    To the best of our knowledge, this is the first work using the NTK method to replace gradient descent to diversify the design of FL algorithms. 
    \item Our scheme inherently solves the non-IID data problem of FL. 
    Compared to FedAvg, it is robust to different degrees of data heterogeneity and has a consistently fast convergence speed.       
    We verify the effectiveness of the paradigm theoretically and experimentally. 
    \item We add \textbf{c}ommunication-efficient and \textbf{p}rivacy-preserving features to the paradigm and develop CP-NTK-FL by combining strategies such as random projection and data subsampling. 
    We show that some strategies can also be applied to traditional FL methods. 
    Although such methods cause performance degradation when applied to FedAvg, they only slightly worsen the model accuracy when applied to the proposed CP-NTK-FL.
\end{itemize}

\section{Related Work}

\highlight{Neural Tangent Kernel.}
\cite{jacot2018neural} showed that training an infinitely wide neural network with gradient descent in the parameter space is equivalent to kernel regression in the function space.
\cite{lee2019wide} used a first-order Taylor expansion to approximate the neural network output and derived the training dynamics in a closed form.
For the analyses, \cite{chen2020a} established the generalization bounds for a two-layer over-parameterized neural network with the NTK framework. 
The NTK computation has been extended to convolutional neural networks (CNNs)~\citep{arora2019on}, recurrent neural networks (RNNs)~\citep{alemohammad2021the}, and even to neural networks with arbitrary architectures~\citep{yang2021tensor}.
Empirical studies have also provided a good understanding of the wide neural network training~\citep{lee2020finite}. 

\highlight{Federated Learning.}
FL aims to train a model with distributed clients without transmitting local data~\citep{mcmahan2017communication,kairouz2021advances}. 
FedAvg has been proposed as a generic solution with theoretical analyses and implementation variants.  
Recent studies have shown a growing interest in improving its communication efficiency, privacy guarantees, and robustness to heterogeneity.
To reduce communication cost, gradient quantization and sparsification were incorporated into FedAvg~\citep{reisizadeh2020fedpaq, sattler2019robust}.
From the security perspective, \cite{zhu2019deep} showed that sharing gradients may cause privacy leakage in the model inversion attack. 
To address this challenge, differentially private federated optimization and decentralized aggregation methods were developed~\citep{girgis2021shuffled,cheng2021separation}.
Other works put the focus on the statistical heterogeneity issue and designed methods such as adding regularization terms to the objective function~\citep{li2020federated, smith2017federated} or employing personalized models~\citep{liang2019think}. 
In this work, we focus on a novel FL paradigm where the global model is derived based on the NTK evolution. 
We show that the proposed NTK-FL is robust to statistical heterogeneity inherently, and extend it to a variant with improved communication efficiency and enhanced privacy. 

\highlight{Kernel Methods in Federated Learning.}
The NTK framework has been mostly used for convergence analyses in FL.
\cite{seo2020federated} studied two knowledge distillation methods in FL and compared their convergence properties based on the neural network function evolution in the NTK regime. 
\cite{li2021fedbn} incorporated batch normalization layers to local models and provided theoretical justification for its faster convergence by studying the minimum nonnegative eigenvalue of the tangent kernel matrix. 
To facilitate the understanding of the conventional FL process, \cite{huang2021fl} directly used the NTK framework to analyze the convergence rate and generalization bound of two-layer ReLU neural networks trained with FedAvg.
Likewise, \cite{su2021achieving} studied the convergence behavior of a set of FL algorithms in the kernel regression setting.
In comparison, our work does not focus on pure convergence analyses of existing algorithms.
We propose a novel FL framework by replacing the gradient descent with the NTK evolution.  
\section{Background and Preliminaries}
Symbol conventions are as follows.
We use $[N]$ to denote the set of the integers $\{1, 2,\dots,N\}$. 
Lowercase nonitalic boldface, nonitalic boldface capital, and italic boldface capital letters denote column vectors, matrices, and tensors, respectively.
For example, for column vectors $\mathbf{a}_{j}\in \mathbb{R}^{M}, \, j \in [N]$, $\mathbf{A} = [\mathbf{a}_{1}, \ldots, \mathbf{a}_{N}]$ is an $M\times N$ matrix.
A third-order tensor $\boldsymbol{A} \in \mathbb{R}^{K\times M \times N}$ can be viewed as a concatenation of such matrices.
We use a \textit{slice} to denote a matrix in a third-order tensor by varying two indices \citep{kolda2009tensor}.
Take tensor $\boldsymbol{A}$, for instance: $\boldsymbol{A}_{i::}$ is a matrix of the $i$th horizontal slice, and $\boldsymbol{A}_{:j:}$ is its $j$th lateral slice \citep{kolda2009tensor}. 
The indicator function of an event is denoted by $\ind[\cdot]$.

\subsection{Federated Learning Model}
Consider an FL architecture where a server trains a global model by indirectly using datasets distributed among $M$ workers. 
The local dataset of the $m$th worker is denoted by $\mathcal{D}_{m} = \{(\x_{m,i}, \y_{m,i})\}_{i=1}^{N_m}$, 
where $(\x_{m,i}, \y_{m,i})$ is an input-output pair.
The local objective can be formulated as an empirical risk minimization over $N_m$ training examples: 
\begin{equation}
    F_{m}(\w)  = \frac{1}{N_m} \sum_{i=1}^{N_m} R(\w; \x_{m,j}, \y_{m,i}),
\end{equation}
where $R$ is a sample-wise risk function quantifying the error of model with a weight vector $\w \in \mathbb{R}^{d}$ estimating the label $\y_{m,i}$ for an input $\x_{m,i}$. 
The global objective function is denoted by $F(\w)$, and the optimization problem may be formulated as:
\begin{equation}
    \min_{\w \in \mathbb{R}^{d}} F(\w) = \frac{1}{M} \sum_{m=1}^{M} F_{m}(\w).
\end{equation}

\subsection{Linearized Neural Network Model}\label{section:linearized_nn}
Let $(\x_i, \y_i)$ denote a training pair, with $\x_i \in \mathbb{R}^{d_1}$ and $\y_i \in \mathbb{R}^{d_2}$, where $d_1$ is the input dimension and $d_2$ is the output dimension.  
$\X \triangleq [\x_1, \dots, \x_N]^\top$ represents the input matrix and $\Y \triangleq [\y_1, \dots, \y_N]^\top$ represents the label matrix. 
Consider a neural network function $\neural: \mathbb{R}^{d_1} \rightarrow \mathbb{R}^{d_2}$ parameterized by a vector $\w \in \mathbb{R}^d$, which is the vectorization of all weights for the multilayer network. 
Given an input $\x_i$, the network outputs a prediction $\hy_i = \neural(\w;\x_i)$. 
Let $\ell(\hy_i, \y_i)$ be the loss function measuring the dissimilarity between the predicted result $\hy_i$ and the true label $\y_i$. 
We are interested in finding an optimal weight vector $\w^\star$ that minimizes the empirical loss over $N$ training examples:
\begin{equation}
    \w^\star = \argmin_{\w} L(\w; \X, \Y) 
    \triangleq \frac{1}{N} \sum_{i=1}^N \ell(\hy_i, \y_i).
\end{equation}

One common optimization method is the gradient descent training.
Given the learning rate $\eta$, gradient descent updates the weights at each time step as:
$
    \w[t+1] = \w[t] - \eta \nabla_{\w} L. 
$
To simplify the notation, let $\neural[t](\x) $ be the output at time step $t$ with an input $\x$, i.e., $\neural[t](\x) \triangleq \neural(\w[t];\x)$. 
Following \cite{lee2019wide}, we use the first-order Taylor expansion around the initial weight vector $\w[0]$ to approximate the neural network output given an input $\x$, i.e., 
\begin{equation}
    \neural[t](\x) \approx \neural[0](\x) + \Jac[0](\x)(\w[t] - \w[0]), 
\end{equation}
where $\Jac[0](\x) = [\nabla \neural[0]_1 (\x), \dots, \nabla \neural[0]_{d_2} (\x)]^\top$, with 
$\nabla \neural[t]_{j}(\x) \triangleq [\partial \hat{y}^{(t)}_j/\partial w^{(t)}_{1}, \ldots,  \partial \hat{y}^{(t)}_j/\partial w^{(t)}_{d}]^\top$ being the gradient of the $j$th component of the neural network output with respect to $\w[t]$.
Consider the halved mean-squared error (MSE) loss $\ell$, namely, 
$
    \ell(\mathbf{a}, \mathbf{b}) = \frac{1}{d_2} \sum_{j=1}^{d_2} \frac{1}{2} (a_{j} - b_{j})^2.
$
Based on the continuous-time limit, one can show that the dynamics of the gradient flow are governed by the following differential equation: 
\begin{equation}\label{eq:ode}
    \frac{\diff \neural}{\diff t} = -\eta \, \kernel[0]\big( \neural[t](\X) - \Y \big),
\end{equation}
where $\neural[t](\X) \in \mathbb{R}^{N \times d_2}$ is a matrix of concatenated output for all training examples,  
and $\kernel[0]$ is the neural tangent kernel at time step $0$, 
with each entry $(\kernel[0])_{ij}$ equal to the scaled Frobenius inner product of the Jacobian matrices: 
\begin{equation}
    (\kernel[0])_{ij} = \frac{1}{d_2} \finnerprod{\Jac[0](\x_{i}) }{\Jac[0](\x_{j})}. \label{eq:kernel_entry}
\end{equation}
The differential equation \eqref{eq:ode} has the closed-form solution:  
\begin{equation}\label{eq:ode_solution}
    \neural[t](\X) =  \left( \mathbf{I} -e^{- \frac{\eta t}{N} \kernel[0] }\right) \Y + e^{-\frac{\eta t}{N} \kernel[0] } \neural[0](\X). 
\end{equation}
The neural network state $\neural[t](\X)$ can thus be directly obtained from \eqref{eq:ode_solution} without running the gradient descent algorithm. 
\section{Proposed FL Paradigm via the NTK Framework}

In this section, we present the NTK-FL paradigm (\paperfig{subfig:ntk-fl}) and then extend it to the variant CP-NTK-FL (\paperfig{subfig:cp-ntk-fl}) with improved communication efficiency and enhanced privacy.  
The detailed algorithm descriptions are presented as follows. 
    
\subsection{NTK-FL Paradigm}
NTK-FL follows four steps to update the global model in each round.
\textbf{First}, the server will select a subset $\cset_k$ of clients and broadcast to them a model weight vector $\w[k]$ from the $k$th round. 
Here, the superscript $k$ is the communication round index, and it should be distinguished from the gradient descent time step $t$ in Section \ref{section:linearized_nn}.  
\textbf{Second}, each client will use its local training data $\dset_m$ to compute a Jacobian tensor $\J[k]_m \in \mathbb{R}^{N_m \times d_2 \times d}, \, \forall\; m \in \cset_{k}$, which is a concatenation of $N_m$ sample-wise Jacobian matrices
$(\J[k]_m)_{i::} = [\nabla \neural[k]_1 (\x_{m,i}), \dots, \nabla \neural[k]_{d_2} (\x_{m,i})]^\top$, $i \in [N_m]$.
The client will then upload the Jacobian tensor $\J[k]_m$, labels $\Y_m$, and initial condition $\neural[k](\X_m)$ to the server. 
The transmitted information corresponds to the variables in the solution for the state evolution in \eqref{eq:ode_solution}.
\textbf{Third}, the server will construct a global Jacobian tensor $\J[k] \in \mathbb{R}^{N_k \times d_2 \times d}$ based on received $\J[k]_m$'s, 
with each client contributing $N_{m}$ horizontal slices to $\J[k]$.
Here, we use $N_k$ to denote the number of training examples in communication round $k$ when there is no ambiguity.  

\begin{figure}
    \centering
    \vspace*{-4pt}
    \begin{overpic}[width=\linewidth]{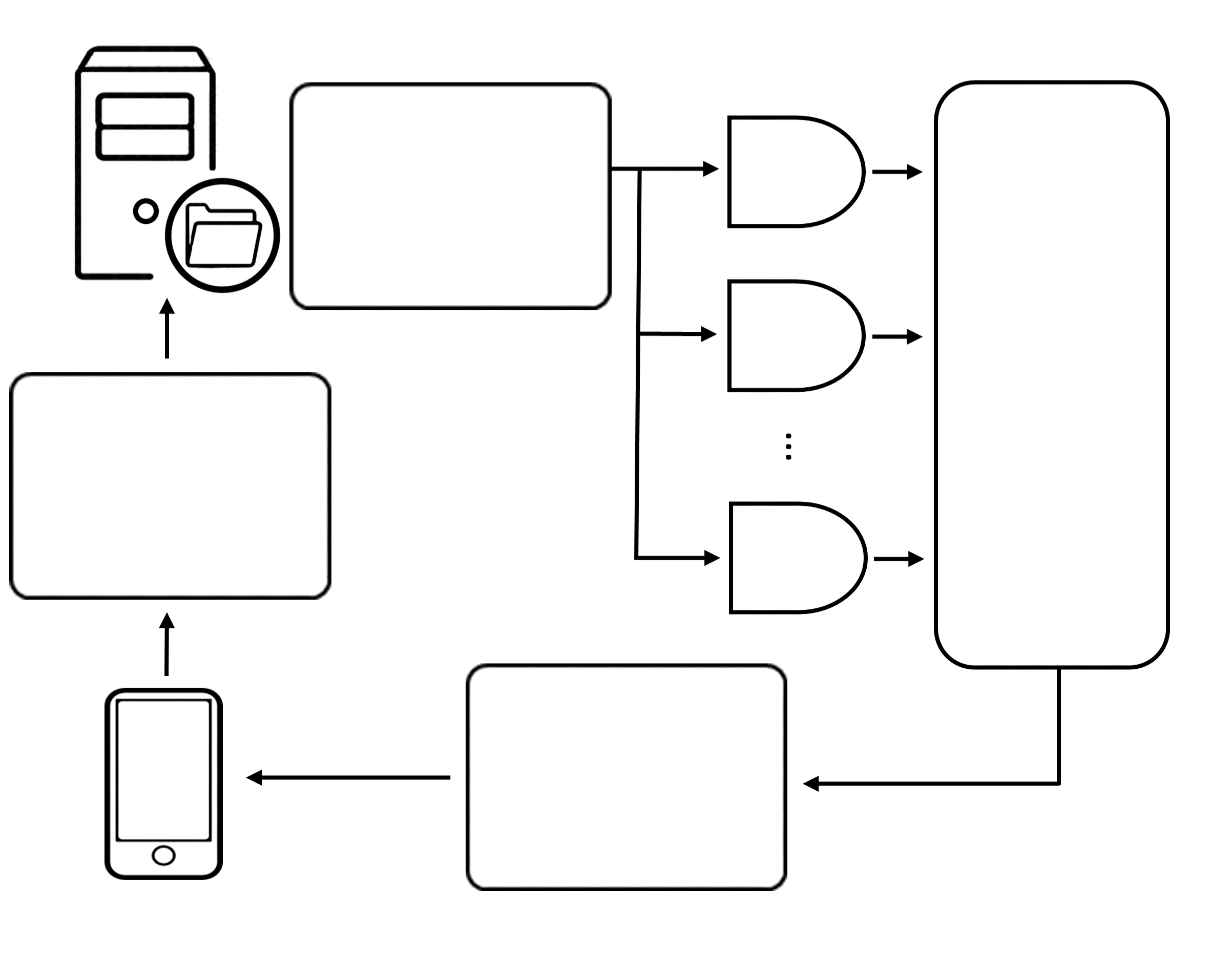}
        \put(1, 2){\zoom{the $m^{\text{th}}$ client}}

        \put(0, 76){\zoom{aggregation server}}

        \put(32.5, 22){\zoom{\cirone}}
        \put(37.5, 20){\zoom{server updates}}
        \put(37.5, 16){\zoom{\& broadcast}}  
        \put(37.5, 11){\zoom{client receives}}
        \put(37.5, 7){\zoom{weight $\w[k]$}}

        \put(23, 16){\zoom{$\w[k]$}}
        \put(13.5, 23){\zoom{$\dset_{m}$}}

        \put(26, 46){\zoom{\cirtwo}}
        \put(0.5, 42){\zoom{client sends}}
        \put(0.5, 37){\zoom{$\J[k]_{m}\!, \neural[k](\X_m),$}}
        \put(0.5, 33){\zoom{and $\Y_{m}$}}

        \put(19, 70){\zoom{\cirthree}} 
        \put(23.5, 67){\zoom{server builds}}
        \put(23.5, 63){\zoom{kernel $\kernel[k]$ \& }}
        \put(23.5, 59){\zoom{peforms weight }} 
        \put(23.5, 55){\zoom{evolution }}

        \put(58.5, 62.5){\zoom[0.65]{$\w[k, t_1]$}}
        \put(58.5, 49){\zoom[0.65]{$\w[k, t_2]$}}
        \put(58.5, 31){\zoom[0.65]{$\w[k, t_\Psi]$}}
        \put(52, 65){\zoom{$t_1$}}
        \put(52, 52){\zoom{$t_2$}}
        \put(52, 34){\zoom{$t_\Psi$}}

        \put(93, 70){\zoom{\cirfour}}
        \put(76, 54){\zoom{server}}
        \put(76, 50){\zoom{evaluates}}
        \put(76, 46){\zoom{$\w[k, t_j]$ \&}}
        \put(76, 42){\zoom{selects the}}
        \put(76, 38){\zoom{best one}}

        \put(70, 16){\zoom{$\w[k, t^{(k)}]$}}
    \end{overpic}\\[-5pt]
    \caption{
    Schematic of NTK-FL. 
    Each client first receives the weight $\w[k]$, 
    and then uploads the Jacobian tensor $\J[k]_m$, label $\Y_m$, and initial condition $\neural[k](\X_m)$.
    The server builds a global kernel $\kernel[k]$ and performs the weight evolution with $\{t_1, \ldots, t_\Psi\}$.
    We use \eqref{eq:t_optim} to find the best $t_j$ and update the weight accordingly. 
    \label{subfig:ntk-fl}
    }
\end{figure}

We use a toy example to explain the process as follows. 
Suppose the server selects client 1 and client 2 in a certain round. 
Clients 1 and 2 will compute the Jacobian tensors $\J[k]_1$ and $\J[k]_2$, respectively.
The global Jacobian tensor is constructed as:  
\begin{equation}
    \J[k]_{i::} = \left\{
        \begin{array}{l @{\; \;} l}
        \J[k]_{1, i::} & , \text { if } i \leqslant N_1, \\[3pt]
        \J[k]_{2, j::} & , j=i-N_1, \text { if } i \geqslant N_1 + 1. 
    \end{array}\right.
\end{equation}
After obtaining the global Jacobian tensor $\J[k]$, the $(i,j)$th entry of the global kernel $\kernel[k]$ is calculated as the scaled Frobenius inner product of two horizontal slices of $\J[k]$, i.e.,
$
    (\kernel[k])_{ij} = \frac{1}{d_2}  \langle\J[k]_{i::}, \J[k]_{j::}\rangle_{\text{F}} .  
$
\textbf{Fourth}, the server will perform the NTK evolution to obtain the updated neural network function $\neural[k+1]$ and weight vector $\w[k+1]$. 
With a slight abuse of notation, let $\neural[k,t]$ denote the neural network output at gradient descent step $t$ in communication round $k$. 
The neural network function evolution dynamics and weight evolution dynamics are given by:
\begin{subequations}
\begin{align}
    \neural[k, t] & = \left( \mathbf{I} - e^{- \frac{\eta t}{N_k} \kernel[k] }\right) \labels^{(k)} + e^{- \frac{\eta t}{N_k} \kernel[k] } \neural[k], \label{eq:f_evolution} \\
    \w[k, t] & =  \sum_{j=1}^{d_{2}}  (\J[k]_{:j:})^\top \res[k,t]_{:j} + \w[k], \label{eq:w_evolution}
\end{align}
\end{subequations}
where $\J[k]_{:j:}$ is the $j$th lateral slice of $\J[k]$, 
and $\res[k,t]_{:j}$ is the $j$th column of the residual matrix $\res[k,t]$ defined as follows:
\begin{equation}
    \res[k,t] \triangleq  \frac{\eta}{N_k d_{2}} \sum_{u=0}^{t-1} \big[\Y^{(k)} - \neural[k,u](\X^{(k)}) \big]. 
\end{equation}
Here, $\X^{(k)}$ and $\Y^{(k)}$ denote a concatenation of client training examples and labels, respectively.
The weight evolution in \eqref{eq:w_evolution} is derived by unfolding the gradient descent steps. 
To update the global weight, the server performs the evolution with various integer steps $\{t_1, \ldots, t_\Psi\}$ and selects the best one with the smallest loss value. 
Our goal is to minimize the training loss with a small generalization gap \citep{nakkiran2020deep}. 
The updated weight is decided by the following procedure: 
\begin{subequations}
\begin{align}
    t^{(k)} &= \argmin_{t_j} L( \neural(\w[k,t_j]; \X^{(k)}); \Y^{(k)}), \label{eq:t_optim} \\
    \w[k+1] &\triangleq \w[k,t^{(k)}].
\end{align}
\end{subequations}
Alternatively, if the server has an available validation dataset, the optimal number of update steps can be selected based on the model validation performance.  
In practice, such a validation dataset can be obtained from held-out clients \citep{wang2021field}. 
Based on the closed-form solution in \eqref{eq:w_evolution}, the  search of $t^{(k)}$ over the grid $\{t_1, \dots, t_\Psi\}$ can be completed in $O(\Psi)$ time.

\highlight{Robustness Against Statistical Heterogeneity. }
In essence, statistical heterogeneity comes from decentralized data with heterogeneous distributions owned by individual clients.
If privacy is not an issue, the non-IID challenge can be readily resolved by mixing all clients' datasets and training a centralized model.
In NTK-FL, instead of building a centralized dataset, we use Jacobian matrices to construct a global kernel $\kernel[k]$, 
which is a concise representation of gathered data points from all selected clients. 
This representation is yet more \emph{expressive}/less \emph{compact} than that of a traditional FL algorithm.
More precisely, the update being sent for NTK-FL regarding the $i$th training example of the $m$th client for NTK-FL is $ \Jac_m = [\nabla \mathbf{f}_{1}(\mathbf{x}_{m,i}), \dots, \nabla \mathbf{f}_{d_2}(\mathbf{x}_{m, i})]^\top$, 
whereas the gradient update being sent for FedAvg is $\nabla L(\w; \x_{m,i}, \y_{m,i}) =  \frac{1}{d_2}\sum_{j=1}^{d_{2}}\left(\hat{y}_{m, i, j}-y_{m, i, j}\right) \nabla \mathbf{f}_{j}(\mathbf{x}_{m, i})$, a weighted sum of row vectors in $\Jac_m$. 
The gradient will be further averaged over multiple training examples.  
By sending Jacobian matrices $\Jac_m$ and jointly processing them on the server, NTK-FL delays the more aggressive data aggregation step after the communication stage and therefore better approximates the centralized learning setting than FedAvg does.

\highlight{Comparison of NTK-FL and \cite{huang2021fl}. }
\cite{huang2021fl} presented the details of FedAvg by letting clients use local updates and upload gradients to train a two-layer neural network.
In contrast, NTK-FL let each client transmit Jacobian matrices without performing local SGD steps. 
The model weight is updated via NTK evolution in \eqref{eq:w_evolution}. 
The main differences include: (1) clients transmit more expressive Jacobian matrices to improve model performance in the non-IID FL setting;
(2) more computation is shifted to the server.

\begin{figure*}
    \centering
    \begin{minipage}[b]{0.5\textwidth}
        \centering
        \begin{overpic}[width=\linewidth]{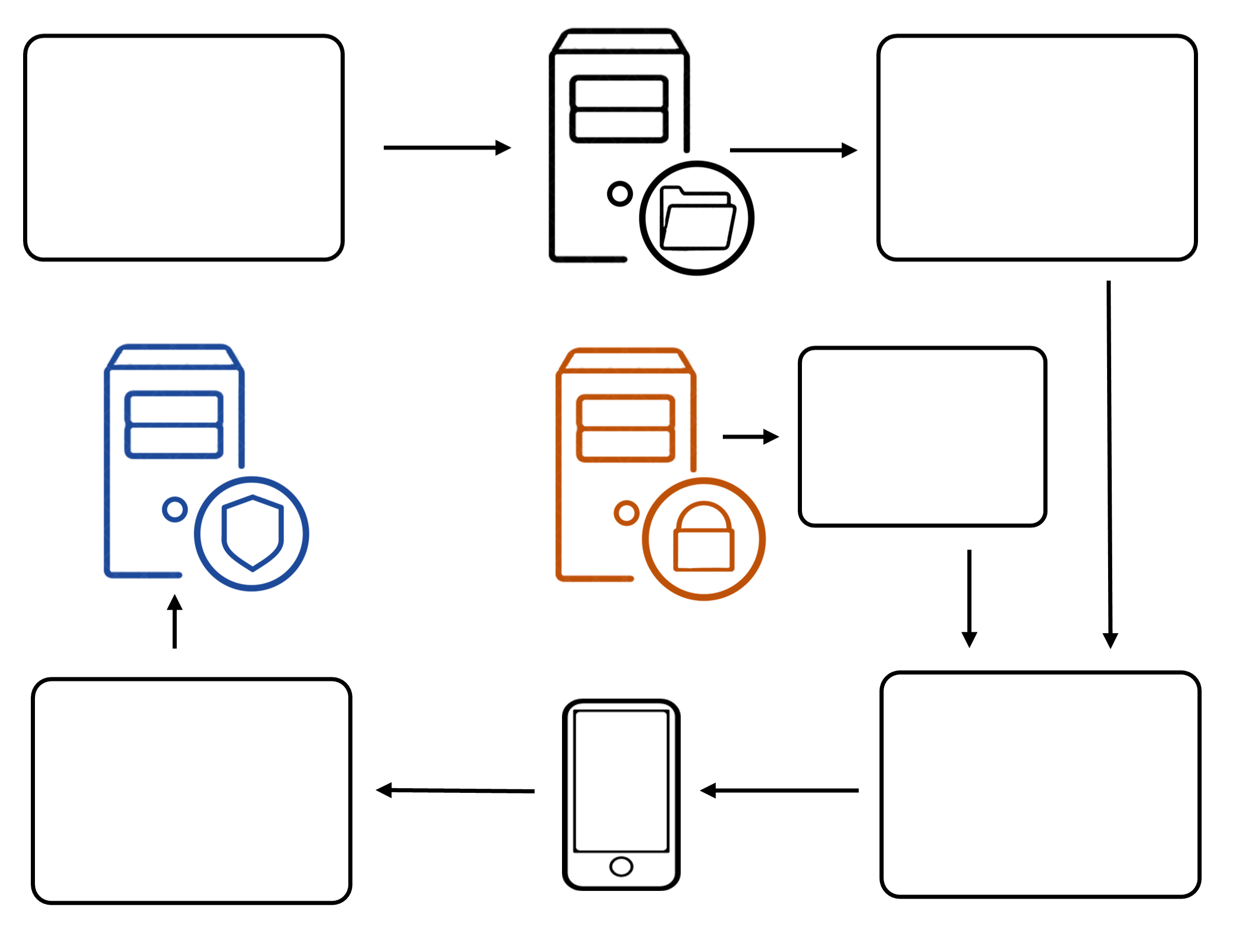}
            \put(1, 51){\zoom{\textbf{shuffling server}}}
            \put(36, 77){\zoom{aggregation server }}
            \put(36, 51){\zoom{\textbf{trusted key server}}}
            \put(40, 1){\zoom{the $m^{\text{th}}$ client}}
    
            \put(83.5, 49){\zoom{\cirone}}
            \put(64.6, 45){\zoom{key server}}
            \put(64.6, 41){\zoom{encrypts $\rho$}}
            \put(64.6, 37){\zoom{\& transmits}}
            \put(63, 28){\zoom{$\text{E}(\text{k}^+_m, \rho)$}}
    
            \put(66.5, 21){\zoom{\cirtwo}}
            \put(71, 18){\zoom{client receives}}
            \put(71, 14){\zoom{weight $\w[k]$ \&}}
            \put(71, 10){\zoom{decrypts to get}}
            \put(71, 6){\zoom{the seed $\rho$}}
            \put(57, 15){\zoom{$\w[k]\!, \rho$}}
            \put(28.5, 15){\zoom{$\mathcal{B}_m \!\subset\! \dset_m$}}
    
            \put(28, 20){\zoom{\cirthree}}
            \put(2.2, 17.5){\zoom{client gets $\Z'_m$}}
            \put(2.2, 12){\zoom{sends $C(\hJ[k]_m),$}}
            \put(2.2, 7){\zoom{$\neural[k]_{m}(\Z'_m), \Y_{m}$ }}
    
            \put(27, 73){\zoom{\cirfour}}
            \put(1.6, 69){\zoom{shuffling server  }}
            \put(1.6, 65){\zoom{performs}}
            \put(1.6, 61){\zoom{permutation}}
    
            \put(66.5, 73){\zoom{\cirfive}}
            \put(71, 70){\zoom{aggregagtion }}
            \put(71, 66){\zoom{server builds}}
            \put(71, 62){\zoom{kernel $\kernel[k]$ \& }}
            \put(71, 58){\zoom{obtains $\Delta \w[k]$}}
        \end{overpic}\\[-3pt]
        \caption{
        Schematic of CP-NTK-FL.
        A trusted key server (orange) sends an encrypted seed $\text{E}(\text{k}^+_m, \rho)$ with the public key $\text{k}^+_m$ for random projection. 
        The client transmits the required message to the shuffling server (blue) to perform a permutation.
        \label{subfig:cp-ntk-fl}
        }
    \end{minipage}
    \hfill
    \begin{minipage}[b]{0.48\textwidth}
    \centering
    \includegraphics[width=\linewidth]{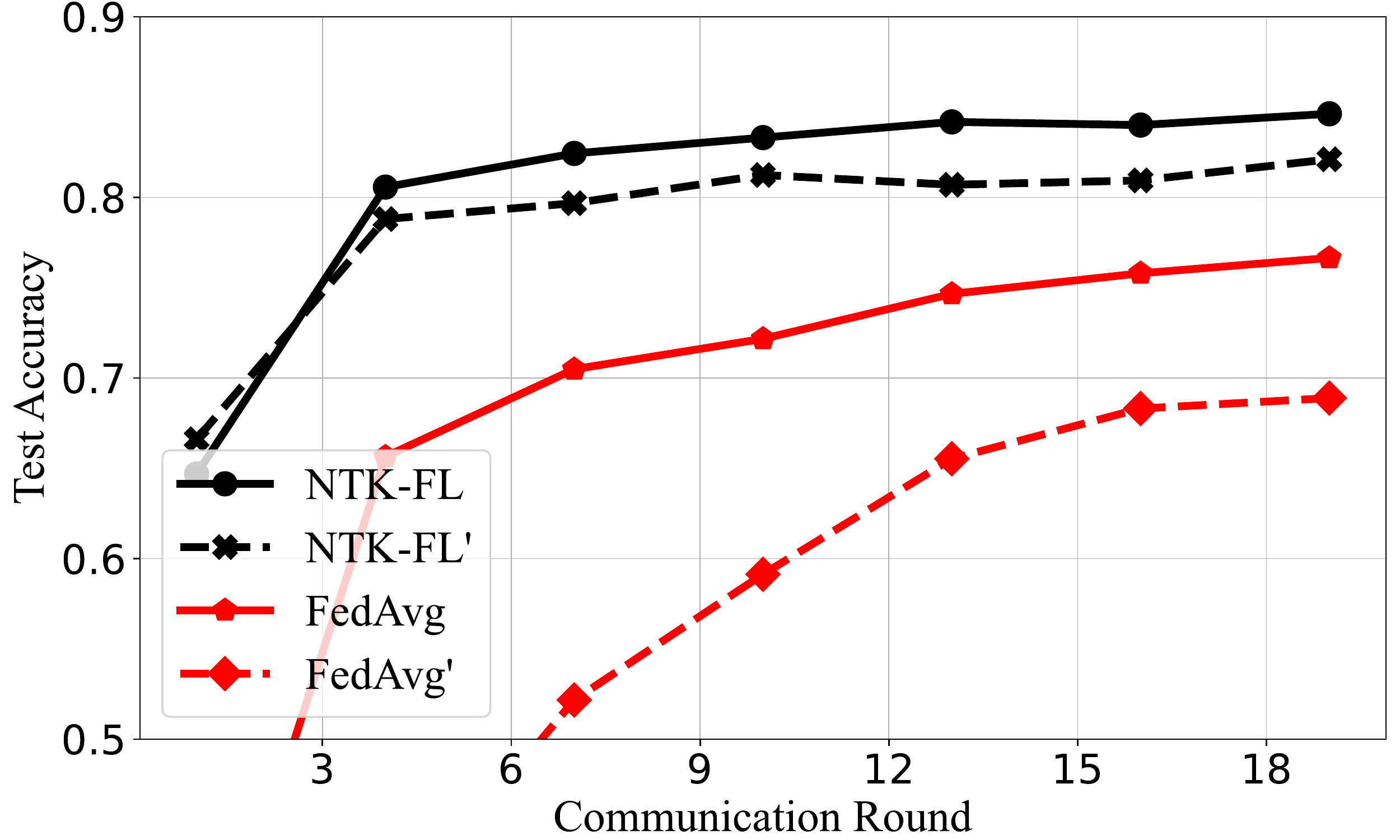}
    \caption{
    Training results of $300$ clients via NTK-FL and FedAvg, along with variants with the local dataset subsampling and random projection, denoted as NTK-FL$'$ and FedAvg$'$, respectively.  
    We train a two-layer multilayer perceptron on the Fashion-MNIST dataset.
    The joint effect causes more accuracy degradation in FedAvg (red) than in NTK-FL (black).
    \label{fig:extensions}
    }
    \end{minipage}
\end{figure*}

\subsection{CP-NTK-FL Variant}
Compared to FedAvg, NTK-FL does not incur additional client computational overhead since calculating the Jacobian tensor enjoys the same computation efficiency as computing aggregated gradients. 
Without locally updating weight vectors, NTK-FL is faster than FedAvg on the client side.
In this section, we focus on the perspectives of communication efficiency and security in terms of data confidentiality and membership privacy.

For communication, we follow the widely adopted analysis framework in wireless communication to examine only the client uplink overhead, assuming that the downlink bandwidth is much larger and the server will have enough transmission power~\citep{tran2019federated}.  
In NTK-FL, the client uplink communication cost and space complexity are dominated by a third-order tensor $\J[k]_{m}$, i.e., an $\order{N_m d_2 d}$ complexity compared to $\order{d}$ in FedAvg. 
For security, we investigate a threat model where a curious server may perform membership inference attacks \citep{nasr2018machine} or model inversion/data reconstruction attacks \citep{zhu2019deep}. 
Compared to the averaged gradient, sample-wise Jacobian matrices are more expressive, which may facilitate such attacks from the aggregation server.
We extend NTK-FL by combining various tools to solve the aforementioned problems without jeopardizing the performance severely. 
These tools are optional building modules and can be adopted separately or jointly, depending on the available resources in practice. 
Although it is possible to incorporate these tools into FedAvg, we will show that overall it will lead to a more severe accuracy drop.

\highlight{Jacobian Dimension Reduction. } 
First, we let the $m$th client sample a subset $\mathcal{B}_{m}$ from its dataset $\mathcal{D}_{m}$ uniformly for the training. 
Let $\beta \in (0,1)$ denote the sampling rate, $\mathcal{B}_{m}$ contains $N'_m = \beta N_m$ data points, with the training pairs denoted by $(\X'_m, \Y'_m)$. 
In general, using more data points will improve the model generalizability \citep{mohri2018foundations}.
The sampling rate $\beta$ controls the trade-off between efficiency and model performance. 
Next, we consider using a random projection to preprocess the input data via a seed shared by a \emph{trusted key server}. 
Formally, the sampled training examples are  projected into $\Z'_m$, i.e., 
$
    \Z'_m = \X'_m \mathbf{P},
$
where $\mathbf{P} \in \mathbb{R}^{d_1 \times d'_1}$ is a projection matrix generated based on a seed $\rho$ with IID standard Gaussian entries.
In general, we have $d_1' < d_1$ and an non-invertible projection operation. 
The concept of trusted key server follows the trusted third party in cryptography \citep{van2020computer}, and we assume it will not be compromised. 

These two steps can already improve communication efficiency and confidentiality. 
We first examine the current Jacobian tensor $\hJ[k]_m \in \mathbb{R}^{N'_m \times d_2 \times d'}$. 
Compared with its original version $\J[k]_m$, it has reduced dimensionality at the cost of certain information loss. 
Meanwhile, the random projection will defend against the data reconstruction attack, as the Jacobian tensor is now evaluated at the projected data $\Z'_m$. 
We empirically verify their impact on the test accuracy in \paperfig{fig:extensions}.
We set $d'_1 = 100$ and sampling rate $\beta = 0.4$, and train a multilayer perceptron with $100$ hidden nodes on the Fashion-MNIST dataset \citep{xiao2017fashion}. 
The joint effect of these strategies is a slight accuracy drop in NTK-FL and a nonnegligible accuracy degradation in FedAvg. 

\highlight{Jacobian Compression and Shuffling. }
We use a compression scheme to reduce the size of the Jacobian tensor by zeroing out the coordinates with small magnitudes \citep{alistarh2018the}. 
In addition to the communication efficiency, this compression scheme is empirically effective against the data reconstruction attack \citep{zhu2019deep}. 
To further ensure the confidentiality and membership privacy, we introduce a \textit{shuffling server}, inspired by some recent frameworks \citep{girgis2021shuffled, cheng2021separation}, to permute Jacobian tensors $\J[k]_m$'s, neural network states $\neural[k]_m$'s, and labels $\Y_m$'s.
Based on \eqref{eq:w_evolution}, we denote the model update by $\Delta \w[k] \triangleq \w[k+1] - \w[k] = \sum_{j=1}^{d_{2}}  (\J[k]_{:j:})^\top \res[k,t]_{:j}$, which is a sum of matrix products. 
If rows and columns are permuted in synchronization, the weight update $\Delta \w[k]$ will remain unchanged.
Considering the high dimensionality of the neural network weight, the reconstruction attack becomes computationally infeasible.
Since introducing a new differential privacy mechanism is not the main focus of this work and the privacy protection analysis is consistent with the existing framework~\citep{girgis2021shuffled}, we do not intend to go into details.  
Meanwhile, as a provable differential privacy guarantee does not explicitly protect against data reconstruction attacks~\citep{zhang2020secret}, we empirically study the privacy loss under the deep leakage from gradient algorithm~\citep{zhu2019deep} in Appendix~\ref{section:attack}.  
A thorough study of different attack schemes is out of the scope of this work and we leave it for future work.

\section{Analysis of Algorithm}
In this section, we analyze the loss decay rate between successive communication rounds in NTK-FL and make comparisons with FedAvg. 
Similar to prior work \citep{du2019gradient, dukler2020optimization}, we consider a two-layer neural network  $f:\mathbb{R}^{d} \rightarrow \mathbb{R}$ of the following form to facilitate our analysis: 
\begin{equation}\label{eq:relunet_func}
    f(\w; \x) = \frac{1}{\sqrt{n}} \sum_{r=1}^n  c_r \sigma( \vecv_{r}^\top \x ), 
\end{equation}
where $\x \in \mathbb{R}^{d_1}$ is an input, $\vecv_{r} \in \mathbb{R}^{d_1}$ is the weight vector in the first layer, $c_r$ is the weight in the second layer, and $\sigma(\cdot)$ is the rectified linear unit (ReLU) function, namely $\sigma(z) = \max(z,\,0)$, applied coordinatewise. 
\revKY{Without loss of generality, we assume the selected client set $\cset_k$ is $\{1, 2, \dots, M_{k}\} $ in communication round $k$. 
Let $\X^{(k)} = [\X^\top_1, \dots, \X^\top_{M_k}]^\top \in \mathbb{R}^{N_{k} \times d_1}$ denote a concatenation of client inputs and $\y^{(k)} = [\y^\top_1, \dots, \y^\top_{M_k}]^\top \in \mathbb{R}^{N_k \times d_2}$ denote a concatenation of client labels.  
In the following analysis, we assume $d_2 = 1$ for simplicity. }
We state two assumptions as prerequisites. 
\begin{Assumption}\label{assumption:initialization} 
    (Weight Distribution).
    \revKY{When broadcast in communication round $k$, 
    the first layer  $\vecv^{(k)}_{r}$'s follow the normal distribution $\mathcal{N}(0, \Sigma_k)$.
    The minimum eigenvalue of the covariance matrix is bounded by $\lambda_{\min}(\Sigma_k) \geqslant \alpha^2_k$, 
    where $\alpha_k$ is a positive constant. }
    The second layer  $c_r$'s are sampled from $\{-1,1\}$ with equal probability and are kept fixed during training. 
\end{Assumption}
\begin{Assumption}\label{assumption:input}
    (Normalized input). The input data are normalized, i.e., $\|\x_i\|_2 = 1, \forall\; i \in [N_k]$. 
\end{Assumption}
For this neural network model, the $(i,j)$th entry of the empirical kernel matrix $\kernel[k]$ given in \eqref{eq:kernel_entry} can be calculated as:
$
    (\kernel[k])_{ij} = \frac{1}{n} \x_i^\top \x_j \sum_{r=1}^n \indi[k]_{ir} \indi[k]_{jr},
$
where $\indi[k]_{ir} \triangleq \ind\{ \langle\vecv[k]_{r}, \x_i \rangle \geqslant 0\}$, and the term $c^2_r$ is omitted according to \paperassumption{assumption:initialization}.    
\revKY{Define $\kernelinf$, whose $(i, j)$th entry is given by:}
\begin{equation}
    \!(\kernelinf)_{ij} \triangleq \expect_{\vecv[k]}\! \left[\x^\top_i \x_j \ind^{(k)}_i \!  \ind^{(k)}_j \right],\!
\end{equation}
\revKY{where $\ind^{(k)}_i \triangleq  \ind(\langle \vecv[k], \x_i \rangle \geqslant 0)$ and $\vecv[k]$ follows the normal distribution $\mathcal{N}(0, \Sigma_k)$.}
Let $\lambda_{k}$ denote the minimum eigenvalue of $\kernelinf$, which is restricted as follows.
\begin{Assumption}\label{assumption:pdkernel}
    The kernel matrix $\kernelinf$ is positive definite, namely, $\lambda_{k} > 0$. 
\end{Assumption}
In fact, the positive-definite property of  $\kernelinf$ can be shown under certain conditions \citep{dukler2020optimization}. 
For simplicity, we omit the proof details and directly assume the positive definiteness of $\kernelinf$ in \paperassumption{assumption:pdkernel}. 
\revKY{Next, we study the residual term $\|f^{(k)}(\X^{(k)})-\y^{(k)}\|_{2}^{2}$.}
We give the convergence result by analyzing how the residual term decays given training examples $\X^{(k)}$.
\begin{Theorem}\label{th:ntkfl}
    For the NTK-FL scheme under Assumptions \ref{assumption:initialization} to \ref{assumption:pdkernel}, 
    let the learning rate $\eta=\order{\frac{\lambda_{k}}{N_k}}$ and the neural network width $n = \Omega\left(\frac{ N_k^{2}}{\lambda_{k}^{2}} \ln \frac{ N_k^{2}}{\delta} \right)$, 
    then with probability at least $1-\delta$,
    \begin{multline}
        \normsq{f^{(k+1)}(\X^{(k)})-\y^{(k)}}  \leqslant \\  \left(1\!-\!\frac{\eta \lambda_{k} }{2 N_k} \right)^{t^{(k)}} \! \normsq{f^{(k)}(\X^{(k)}) - \mathbf{y}^{(k)}}, 
    \end{multline}
    where $t^{(k)}$ is the number of update steps defined in \eqref{eq:t_optim}.
\end{Theorem}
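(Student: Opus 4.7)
The plan is to adapt the standard NTK convergence pipeline \cite{du2019gradient, dukler2020optimization} to a single communication round of NTK-FL, in which the server updates the global weights via the closed-form evolution~\eqref{eq:w_evolution} rather than by running gradient descent on the network directly. Three ingredients are required: concentration of the empirical kernel around its infinite-width limit, geometric decay of the linearized residual, and smallness of the gap between the true ReLU network and its first-order approximation at the updated weight.

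First, I would establish that $\kernel[k]$ concentrates around $\kernelinf$. Each entry $(\kernel[k])_{ij}$ is an average of $n$ bounded i.i.d.\ random variables under \paperassumption{assumption:initialization}, so a Hoeffding bound followed by a union bound over the $N_k^2$ pairs yields $\|\kernel[k] - \kernelinf\|_F \leqslant \lambda_k/4$ with probability at least $1-\delta$ under the stated width $n = \Omega(N_k^2 \lambda_k^{-2} \ln(N_k^2/\delta))$; combined with \paperassumption{assumption:pdkernel} this gives $\lambda_{\min}(\kernel[k]) \geqslant \lambda_k/2$. Second, I would analyze the one-step evolution of the linearized residual. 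The discrete form of~\eqref{eq:f_evolution} reads $\neural[k,t+1](\X^{(k)}) - \y^{(k)} = (\mathbf{I} - (\eta/N_k)\kernel[k])(\neural[k,t](\X^{(k)}) - \y^{(k)})$, and expanding the squared norm with $\eta = O(\lambda_k/N_k)$ (chosen so that $\eta^2 \|\kernel[k]\|^2/N_k^2$ is absorbed into $\eta\lambda_k/(2N_k)$) yields the one-step contraction $\|\cdot\|^2 \mapsto (1-\eta\lambda_k/(2N_k))\|\cdot\|^2$. Iterating $t^{(k)}$ times produces the claimed geometric decay for the \emph{linearized} residual.

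The main technical obstacle is bridging the linearized residual and the true residual $\neural[k+1](\X^{(k)}) - \y^{(k)}$ appearing in the theorem: the server updates $\w[k+1]$ using a formula derived from the first-order expansion, yet $\neural[k+1]$ denotes the actual two-layer ReLU network evaluated at this new weight. To bridge them, I would first bound the weight movement $\|\w[k,t] - \w[k]\|_2$ using the closed form~\eqref{eq:w_evolution} together with the residual decay from the previous step, obtaining a radius $R = O(\sqrt{N_k}\,\lambda_k^{-1}\|\neural[k](\X^{(k)}) - \y^{(k)}\|_2)$. Under \paperassumption{assumption:initialization}, the activation pattern $\indi[k]_{ir}$ at hidden unit $r$ and input $\x_i$ flips only when $|\langle \vecv[k]_r, \x_i \rangle| \leqslant R/\sqrt{n}$, and Gaussian anti-concentration plus a union bound show that only an asymptotically small fraction of units change sign once $n$ satisfies the stated width bound; this keeps the linearization error $\|\neural[k,t](\X^{(k)}) - f^{\mathrm{lin}}_{t}(\X^{(k)})\|_2$ negligible relative to the linearized residual.

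Finally, since $t^{(k)}$ is defined in~\eqref{eq:t_optim} as the minimizer of the true training loss over $\{t_1,\dots,t_\Psi\}$, the residual at $\w[k+1] = \w[k,t^{(k)}]$ is no larger than that predicted by the linearized analysis at step $t^{(k)}$, which is then inherited by the actual network up to the small linearization gap. All three sources of slack --- concentration error, discrete-time contraction, and linearization error --- are absorbed into the factor of $1/2$ in the exponent, producing the stated bound. The most delicate part of the argument is simultaneously balancing these three error terms under the prescribed width and learning-rate conditions so that their combined effect collapses cleanly into a single $\lambda_k/2$ in the geometric decay rate.
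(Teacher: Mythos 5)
Your outline follows a genuinely different route from the paper's, and the place where the two diverge is exactly where your argument has a gap. The paper never compares the linearized dynamics to the true network: its proof analyzes the \emph{true} residual $f^{(k,t+1)}(\X^{(k)})-\y^{(k)}$ directly, writing the per-coordinate change $f^{(k,t+1)}(\x_i)-f^{(k,t)}(\x_i)$ of the actual ReLU network, splitting it into contributions from neurons outside and inside the flip set $\sset_i^{(k)}$ ($d_i^{\RN{1}}$ and $d_i^{\RN{2}}$), and bounding the resulting kernel quadratic forms via \paperlemma{lemma:bound_si}, \paperlemma{lemma:bound_kernel_time}, and \paperlemma{lemma:bound_kernel_var}. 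That yields a one-step contraction $\left(1-\frac{\eta\lambda_k}{2N_k}\right)$ on the true residual, iterated $t^{(k)}$ times, and the stated width $n=\Omega\big(\tfrac{N_k^2}{\lambda_k^2}\ln\tfrac{N_k^2}{\delta}\big)$ suffices because no linearization error ever has to be controlled. Your kernel-concentration step and your use of the flip-radius/Gaussian anti-concentration argument do reproduce the paper's Lemmas, so those ingredients are sound.

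The gap is in your bridging step. You propose to prove geometric decay for the \emph{linearized} residual from \eqref{eq:f_evolution} and then claim the linearization error at the NTK-evolved weight $\w[k,t^{(k)}]$ from \eqref{eq:w_evolution} can be ``absorbed into the factor of $1/2$.'' That absorption is multiplicative: you need the true-vs-linearized gap at step $t^{(k)}$ to be small \emph{relative to} a residual that has already decayed by $\left(1-\frac{\eta\lambda_k}{2N_k}\right)^{t^{(k)}}$, which in general forces the width (or the admissible $t^{(k)}$) to depend on $t^{(k)}$ and the target accuracy --- a dependence absent from the stated hypotheses. The paper's own \paperlemma{lemma:gap_ntk_gd} in \paperappendix{section:update_step} shows the discrepancy between the NTK weight and the gradient-descent weight \emph{grows} with $t$, and the algorithm searches $t^{(k)}$ up to the order of $10^3$, so the claim that the gap stays negligible uniformly over the grid cannot simply be asserted. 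Your appeal to the argmin property of \eqref{eq:t_optim} does not repair this: minimality of the true loss over the grid tells you nothing about how the true residual at $t^{(k)}$ compares to the linearized prediction at that same $t^{(k)}$; you would still need a quantitative linearization-error bound at that step. To close the argument along your route you would have to prove a uniform-in-$t$ coupling bound (Lee et al.-style, with explicit width dependence) and add the corresponding additive error to the theorem, or else abandon the linearized detour and bound the true residual per step as the paper does.
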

The proof of \papertheorem{th:ntkfl} can be found in Appendix \ref{section:proof_ntkfl}. 
\revKY{We discuss the choice of the optimal number of update steps $t^{(k)}$ below. }
\begin{Remark}
\revKY{Based on the solution given by \eqref{eq:f_evolution}, the neural network prediction error is a decreasing function of the update steps $t$. 
However, one could not pick an arbitrarily large $t$ as the final optimal number of update steps $t^{(k)}$. 
When $t$ increases, the neural network evolution in the function space does not consistently match with the evolution in the weight space. 
Empirical studies in the centralized training have confirmed the nonnegligible gap between the NTK weight and gradient descent weight when $t$ is greater than the order of $10^2$ to $10^3$ \citep{lee2019wide}.   
In \paperlemma{lemma:gap_ntk_gd} of \paperappendix{section:update_step}, we provide detailed explanations and show that the difference between the NTK weight in \eqref{eq:w_evolution} and the gradient descent weight increases with $t$.}
\end{Remark}

Next, we compare the proposed NTK-FL with FedAvg. 
By studying the asymmetric kernel matrix caused by local update \citep{huang2021fl}, we have the following theorem for FedAvg, where the proof can be found in Appendix \ref{section:proof_fedavg}.

\begin{figure*}[tb]
    \centering
    \subcaptionbox{}[0.325\textwidth]{
        \includegraphics[width=\linewidth]{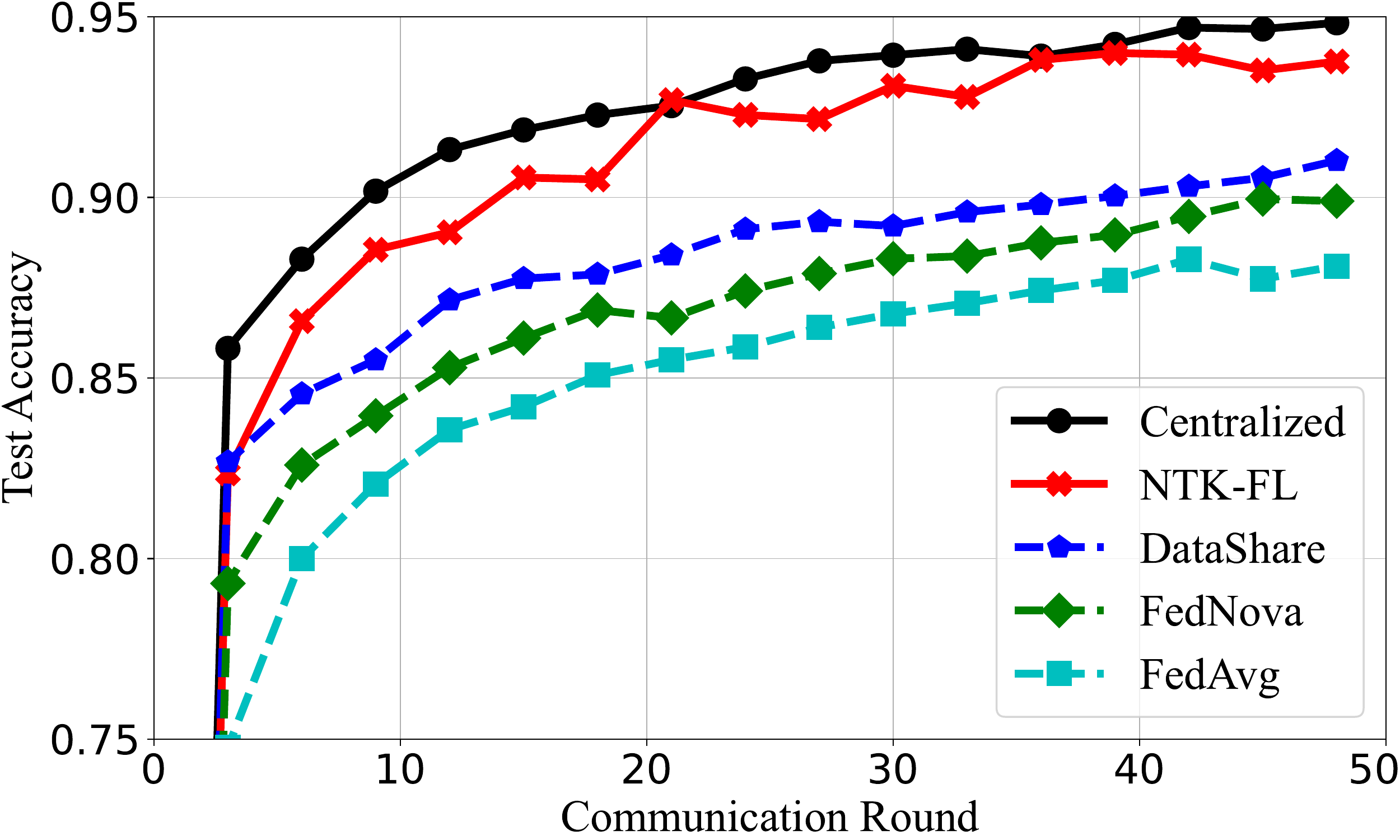}
    }
    \subcaptionbox{}[0.325\textwidth]{
        \includegraphics[width=\linewidth]{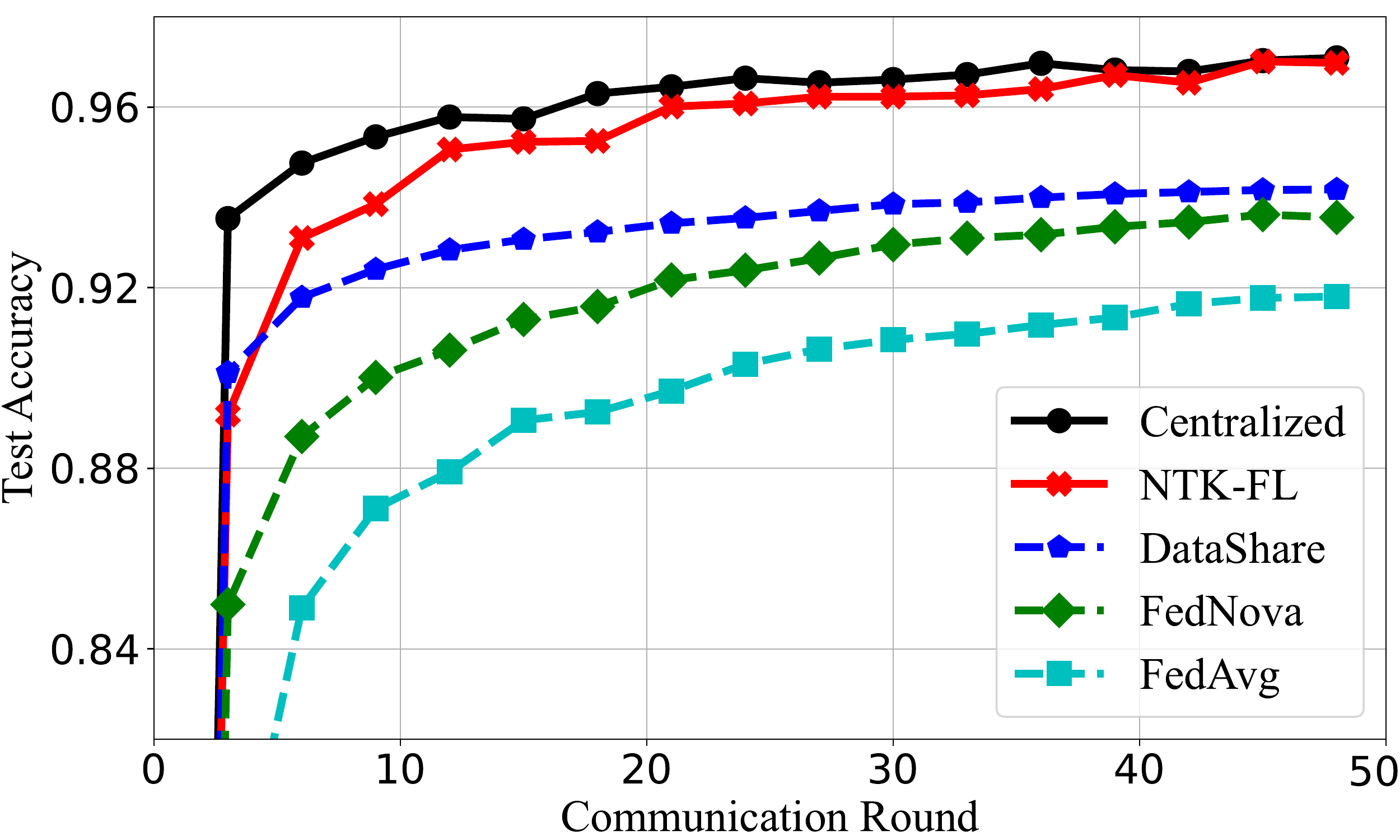}
    }
    \subcaptionbox{}[0.325\textwidth]{
        \includegraphics[width=\linewidth]{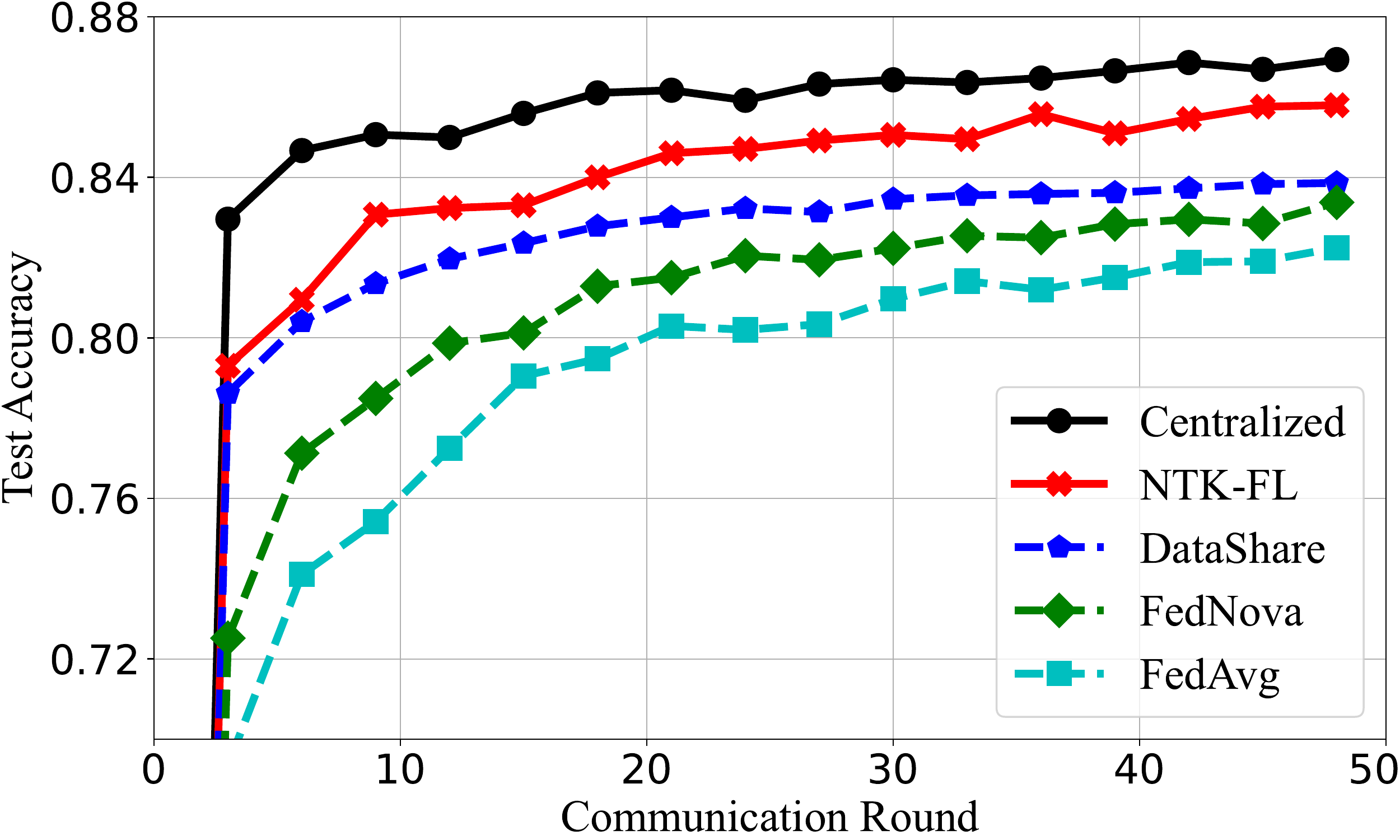}
    }

    \caption{
    Test accuracy versus communication round of different methods evaluated on: 
    (a) FEMNIST dataset, where the heterogeneity comes from feature skewness. 
    (b) non-IID MNIST dataset with label skewness ($\alpha=0.5$). 
    (c) non-IID Fashion-MNIST dataset with label skewness ($\alpha=0.5$). 
    NTK-FL outperforms all baseline FL algorithms in different scenarios and achieves similar test performance compared with the ideal centralized training case.  
    \label{fig:learning_curves}}
\end{figure*}

\begin{Theorem}\label{th:fedavg}
    For FedAvg under Assumptions \ref{assumption:initialization} to \ref{assumption:pdkernel}, 
    let the learning rate $\eta=\order{\frac{\lambda_{k}}{\tau N_k M_k}}$ and 
    the neural network width $n = \Omega\left(\frac{ N_k^{2}}{\lambda_{k}^{2}} \ln \frac{ N_k^{2}}{\delta} \right)$, 
    then with probability at least $1-\delta$, 
    \begin{multline}
        \normsq{f^{(k+1)}(\X^{(k)}) - \y^{(k)}} \leqslant \\ \left( 1 - \frac{\eta \tau \lambda_{k}}{2 N_k M_k}  \right) \normsq{f^{(k)}(\X^{(k)}) - \y^{(k)}}, 
    \end{multline}
    where $\tau$ is the number of local iterations, and $M_k$ is the number of clients in communication round $k$. 
\end{Theorem}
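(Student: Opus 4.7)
The plan is to adapt the NTK-based one-round convergence analysis for FedAvg (in the style of \cite{du2019gradient, huang2021fl}) by treating the broadcast weights $\w[k]$ as a warm-start analogue of a random initialization and tracking how the residual $\neural[k](\X^{(k)}) - \y^{(k)}$ evolves over a single communication round consisting of $\tau$ local SGD steps on each of the $M_k$ clients followed by server averaging. The key new difficulty, and the main departure from \papertheorem{th:ntkfl}, is that the effective kernel driving the update is \emph{asymmetric}, because each client's local trajectory drifts away from $\w[k]$ during the $\tau$ local steps.

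First, I would unfold the FedAvg update as
\[
    \w[k+1] - \w[k] = -\frac{\eta}{M_k}\sum_{m=1}^{M_k}\sum_{t=0}^{\tau-1}\nabla F_m(\w[k,t]_m),
\]
where $\w[k,t]_m$ is client $m$'s local iterate at step $t$, and then first-order Taylor-expand $\neural[k+1](\X^{(k)})$ around $\w[k]$. Plugging in the local gradient $\nabla F_m(\w) = \frac{1}{N_m}\sum_i (f(\w;\x_{m,i}) - y_{m,i})\nabla f(\w;\x_{m,i})$ and regrouping sample-wise yields an expansion of the form
\[
    \neural[k+1](\X^{(k)}) - \y^{(k)} = \Bigl(\mathbf{I} - \tfrac{\eta\tau}{N_k M_k}\asyker[k]\Bigr)\bigl(\neural[k](\X^{(k)}) - \y^{(k)}\bigr) + \eps,
\]
where $\asyker[k]$ is an asymmetric kernel whose $(i,j)$ entry pairs $\nabla f(\w[k];\x_i)$ with the local-trajectory Jacobians at $\x_j$, averaged across clients and local steps, and $\eps$ gathers the Taylor remainder plus higher-order terms from residual drift along the $\tau$ local steps.

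Second, I would control the spectrum of the baseline kernel $\kernel[k]$ at the broadcast point. Using \paperassumption{assumption:initialization} on $\vecv[k]_r$, \paperassumption{assumption:input} on $\|\x_i\|_2 = 1$, and the width $n = \Omega(N_k^2/\lambda_k^2 \cdot \ln(N_k^2/\delta))$, a matrix Hoeffding bound over the $n$ hidden neurons gives $\|\kernel[k] - \kernelinf[k]\|_2 \le \lambda_k/4$ with probability $1-\delta/2$, hence $\lambda_{\min}(\kernel[k]) \ge 3\lambda_k/4$ by \paperassumption{assumption:pdkernel}. Third, I would control the deviation $\|\asyker[k] - \kernel[k]\|_2$ via an inductive local-drift lemma: so long as $\max_{m,t}\|\w[k,t]_m - \w[k]\|_2 \le R$ for a small radius $R$, the fraction of ReLU neurons whose activation pattern flips at any $\x_i$ is of order $R$ with probability $1-\delta/2$, which gives $\|\asyker[k] - \kernel[k]\|_2 \le O(N_k R) \le \lambda_k/4$. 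With $\eta = O(\lambda_k/(\tau N_k M_k))$, the drift bound closes inductively: at any step $t$, $\|\w[k,t]_m - \w[k]\|_2 \lesssim \eta t \cdot \|\neural[k](\X^{(k)}) - \y^{(k)}\|_2/\sqrt{n}$, which remains small enough over $\tau$ steps for the induction to hold. Combining, the symmetric part of $\asyker[k]$ has minimum eigenvalue at least $\lambda_k/2$, the remainder $\eps$ is absorbed by the slack between $3\lambda_k/4$ and $\lambda_k/2$, and the stated one-round contraction follows.

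The main obstacle is the asymmetric-kernel control in the third step. Unlike NTK-FL, where every Jacobian in the update is evaluated at $\w[k]$ and the server's kernel is exactly $\kernel[k]$, here client $m$'s local SGD iterates drift away from $\w[k]$, producing a kernel that pairs $\nabla f(\w[k];\cdot)$ with $\nabla f(\w[k,t]_m;\cdot)$. The technical crux is the simultaneous induction coupling (i) the local-drift bound, which grows like $\eta\tau\|\neural[k](\X^{(k)}) - \y^{(k)}\|_2$ per round, with (ii) the activation-flip bound on $\|\asyker[k] - \kernel[k]\|_2$. This is exactly why the learning rate must scale as $1/(\tau M_k)$ relative to NTK-FL: each of the $\tau$ local steps and each of the $M_k$ clients contributes additively to the drift, while only the aggregate sum appears in the per-round contraction coefficient.
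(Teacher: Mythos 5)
Your plan follows essentially the same route as the paper's proof: both analyses track the one-round residual through the asymmetric kernel induced by the clients' local drift (in the style of Huang et al.), control it with the activation-flip (radius-$R$) argument of Lemma~\ref{lemma:bound_si}/\ref{lemma:bound_kernel_time} and the Hoeffding concentration of $\mathbf{H}^{(k)}$ to $\mathbf{H}_{\infty}^{(k)}$, and close with the same parameter choices $R = O(\delta\alpha_k\lambda_k/N_k)$, $n = \Omega\bigl(N_k^{2}\lambda_k^{-2}\ln(N_k^{2}/\delta)\bigr)$, $\eta = O\bigl(\lambda_k/(\tau N_k M_k)\bigr)$. The differences are organizational rather than substantive: the paper splits the update into flipped/unflipped-neuron contributions and bounds the cross terms directly, handling the drifted local residuals explicitly via $(1+\eta)^u$ growth factors (which is where the condition number of $\mathbf{H}_{\infty}^{(k)}$ enters its intermediate bounds), whereas you fold the same effects into a linear evolution operator plus a remainder $\boldsymbol{\epsilon}$ and verify the drift radius by induction --- a repackaging of the same argument, not a different one.
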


\begin{Remark}
    (Fast Convergence of NTK-FL). 
    The convergence rate of NTK-FL is faster than FedAvg. 
    To see this, we compare the Binomial approximation of the decay coefficient in \papertheorem{th:ntkfl} with the decay coefficient in \papertheorem{th:fedavg},
    \begin{equation}
        1 - \frac{\eta_{1} t^{(k)} \lambda_{k} }{2 N_k} + \order{\eta_1^2} <  1 - \frac{\eta_{2} \tau \lambda_{k}}{2 N_k M_k}, \label{eq:fast_convergence}
    \end{equation}
    where $\eta_1 \ll 1$ for a large $N_k$. \footnote{For example, if we have $100$ clients, each of which has $100$ data points, then $N_k$ is on the order of $10^4$. 
	Considering the choice of the learning rate $\eta_1$, the Binomial approximation holds.} 
    The number of NTK update steps $t^{(k)}$ is chosen dynamically in \eqref{eq:t_optim}, which is on the order of $10^2$ to $10^3$, whereas $\tau$ is often on the order of magnitude of $10$ in the literature \citep{reisizadeh2020fedpaq, haddadpour2021federated}.
    One can verify that $\eta_{1} t^{(k)} \lambda_{k}$ is larger than $\eta_{2} \tau \lambda_{k}/M_k$ and draw the conclusion.
\end{Remark}
\begin{figure*}
    \centering
    \begin{minipage}[b]{0.51\textwidth}
      \includegraphics[width=\textwidth]{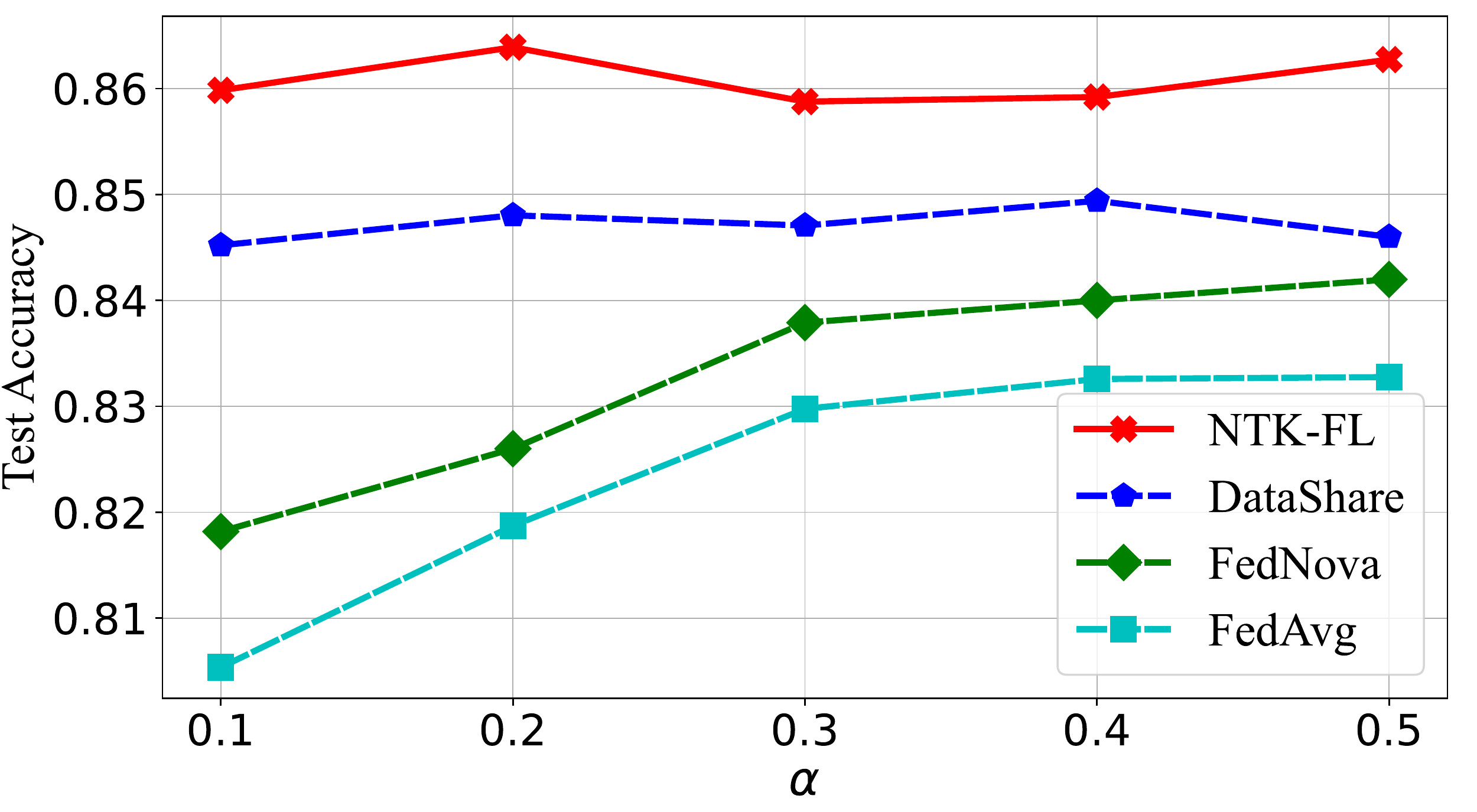}
      \caption{Test accuracy versus the Dirichlet distribution parameter $\alpha$ for different methods evaluated on the non-IID Fashion-MNIST dataset. 
          Reducing the value of $\alpha$ will increase the degree of heterogeneity in the data distribution. 
          NTK-FL is robust to different heterogeneous data distributions, and shows more advantages over FedAvg and FedNova when the degree of heterogeneity is larger.
          \label{fig:acc_alpha}}
    \end{minipage}
    \hfill
    \begin{minipage}[b]{0.45\textwidth}
      \includegraphics[width=\textwidth]{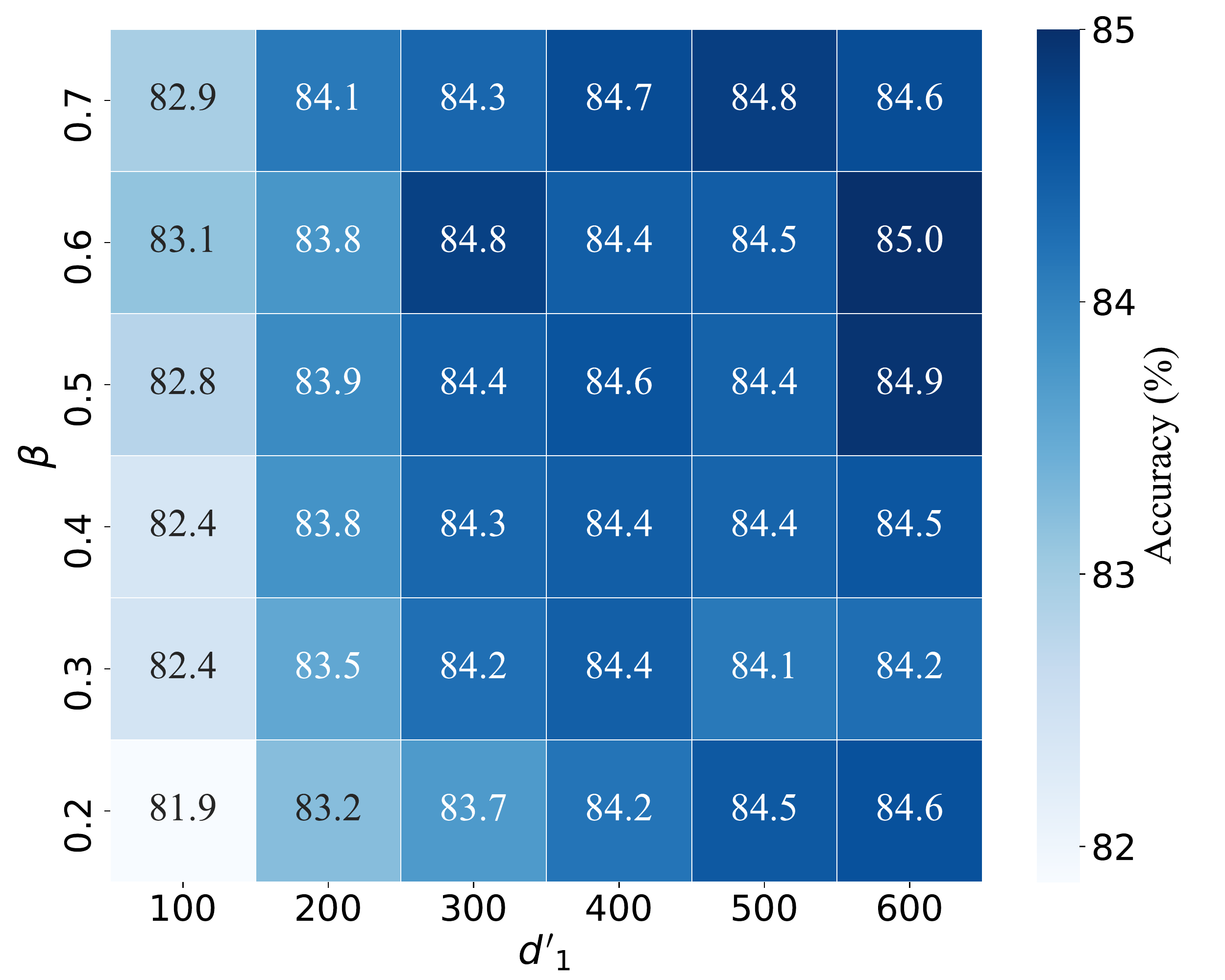}
      \caption{CP-NTK-FL test accuracy for different hyperparams. 
              A larger data sampling rate $\beta$ and a larger dimension $d'_1$ are expected to give a higher test accuracy.
              In general, the scheme is robust to different combinations of hyperparameters. 
              \label{fig:hyperparams}}
    \end{minipage}
\end{figure*}

\section{Experimental Results}\label{section:experiments}

\highlight{Federated Settings. } 
We use three datasets, namely, MNIST~\citep{lecun1998gradient}, Fashion-MNIST~\citep{xiao2017fashion}, and FEMNIST~\citep{caldas2018leaf} digits. 
All of them contain $C = 10$ categories. 
For MNIST and Fashion-MNIST, we follow~\cite{hsu2019measuring} to simulate non-IID data with the symmetric Dirichlet distribution \citep{good1976application}. 
Specifically, for the $m$th client, we draw a random vector $\boldsymbol{q}_m \sim \text{Dir}(\alpha)$, 
where $\boldsymbol{q}_m = [q_{m,1}, \dots, q_{m,C}]^{\top}$ belongs to the $(C-1)$-standard simplex.  
Images with category $k$ are assigned to the $m$th client in proportional to $(100 \cdot q_{m,k})\%$. 
The heterogeneity in this setting mainly comes from label skewness.  
FEMNIST splits the dataset into shards  indexed by the original writer of the digits. 
The heterogeneity mainly comes from feature skewness. 
A multilayer perceptron model with 100 hidden nodes is chosen as the target neural network model.
We consider a total of $300$ clients and select $20$ of them with equal probability in each round.  

\highlight{Convergence. }
We empirically verify the convergence rate of the proposed method. 
For FedAvg, we use the number of local iterations from $\{1,3, \dots ,9,10,20,\dots,50\}$ and report the best results. 
For NTK-FL, we choose $t^{(k)}$ over the set $\{100, 200, \dots, 2000\}$.
We use the following methods that are robust to the non-IID setting as the baselines: 
(i)~Data sharing scheme suggested by ~\cite{zhao2018federated}, where a global dataset is broadcast to clients for local training; 
the size of the global dataset is set to be $10\%$ of the total number of local data points. 
(ii)~Federated normalized averaging (FedNova)~\citep{wang2020tackling}, where the clients transmit normalized gradient vectors to the server. 
(iii)~Centralized training simulation, where the server collects the data points from subset $\cset_k$ of clients and performs gradient descent to directly train the global model. 
Scheme~(iii) achieves the performance that can be considered as an upper bound of all other algorithms. 
The training curves over three repetitions are shown in \paperfig{fig:learning_curves}. 
More implementation details and the results on CIFAR-10 can be found in \paperappendix{section:cifar_experiments}.
Our proposed NTK-FL method shows consistent advantages over other methods in different non-IID scenarios.

\highlight{Degree of Heterogeneity. } 
In this experiment, we select the Dirichlet distribution parameter $\alpha$ from $\{0.1, 0.2, 0.3, 0.4, 0.5\}$ and simulate different degrees of heterogeneity on Fashion-MNIST dataset. 
A smaller $\alpha$ will increase the degree of heterogeneity in the data distribution.
We evaluate NTK-FL, DataShare, FedNova, and FedAvg model test accuracy after training for $50$ rounds.
The mean values over three repetitions are shown in \paperfig{fig:acc_alpha}, where each point is obtained over five repetitions with a standard deviation less than $1\%$. 
It can be observed that NTK-FL achieves stable test accuracy in different heterogeneous settings.
In comparison, FedAvg and FedNova show a performance drop in the small $\alpha$ region. 
NTK-FL has more advantages over baselines methods when the degree of heterogeneity is larger.  

\highlight{Effect of Hyperparameters. }
We study the effect of the tunable parameters in CP-NTK-FL.  
We change the local data sampling rate $\beta$ and dimension $d'_1$, and evaluate the model test accuracy on the non-IID Fashion-MNIST dataset ($\alpha=0.1$) after $10$ communication rounds. 
The results are shown in \paperfig{fig:hyperparams}.
A larger data sampling rate $\beta$ or a larger dimension $d'_1$ will cause less information loss and are expected to achieve a higher test accuracy.
The results also show that the scheme is robust to different combinations of hyperparameters.

\highlight{Uplink Communication. }
In federated learning, uplink communication overhead can be one of the bottlenecks in the training stage. 
We evaluate the uplink communication efficiency of CP-NTK-FL ($d'_1\!=\!200, \beta\!=\!0.3$) by measuring the number of rounds and cumulative uplink communication cost to reach a test accuracy of $85\%$ on non-IID Fashion-MNIST dataset ($\alpha=0.1$).  
The results over three repetitions are shown in \papertab{tab:comm_efficiency}.
Compared with federated learning with compression  (FedCOM) \citep{haddadpour2021federated}, quantized SGD (QSGD) \citep{alistarh2017qsgd}, and FedAvg,  
CP-NTK-FL achieves the goal within an order of magnitude fewer iterations, which is particularly advantageous for applications with nonnegligible encoding/decoding delays or network latency. 

\begin{table}
    \centering
    \caption{Uplink communication cost to reach an accuracy of $85\%$ on non-IID Fashion-MNIST dataset ($\alpha=0.1$) for different methods. 
    CP-NTK-FL can achieve the target within the fewest communication rounds without incurring communication cost significantly. 
    \label{tab:comm_efficiency}}
    \begin{tabular}{p{0.31\linewidth}p{0.23\linewidth}p{0.23\linewidth}}
        \doublerule \\[-18pt]
        optimization algorithms & comm. rounds & comm. cost (MB) \\ \midrule
        CP-NTK-FL & $26$ & $386$ \\ \cmidrule{1-3} 
        FedCOM & $250$ & $379$ \\[3pt] 
        QSGD (4 bit) & $614$ & $465$ \\[3pt]
        FedAvg & $284$ & $1720$  \\\bottomrule
    \end{tabular}
\end{table}

\section{Conclusion and Future Work}
In this paper, we have proposed an NTK empowered FL paradigm. 
It inherently solves the statistical heterogeneity challenge. 
By constructing a global kernel based on the local sample-wise Jacobian matrices, the global model weights can be updated via NTK evolution in the parameter space. 
Compared with traditional algorithms such as FedAvg, NTK-FL has a more centralized training flavor by transmitting more expressive updates.
The effectiveness of the proposed paradigm has been verified theoretically and experimentally. 

In future work, it will be interesting to extend the paradigm for other neural network architectures, such as CNNs, residual networks (ResNets) \citep{he2016deep}, and RNNs. 
It is also worthwhile to further improve the efficiency of NTK-FL and explore its savings in wall-clock time. 
We believe the proposed paradigm will provide a new perspective to solve  federated learning challenges. 

\bibliographystyle{icml}
\bibliography{myref}

\newpage
\appendix
\onecolumn
\newpage
\setcounter{Lemma}{0}
\setcounter{Theorem}{0}

\section{Implementation and Additional Results}
\begin{wrapfigure}[15]{R}{0.48\textwidth}
\centering
\vspace*{-15pt}
\includegraphics[width=\linewidth]{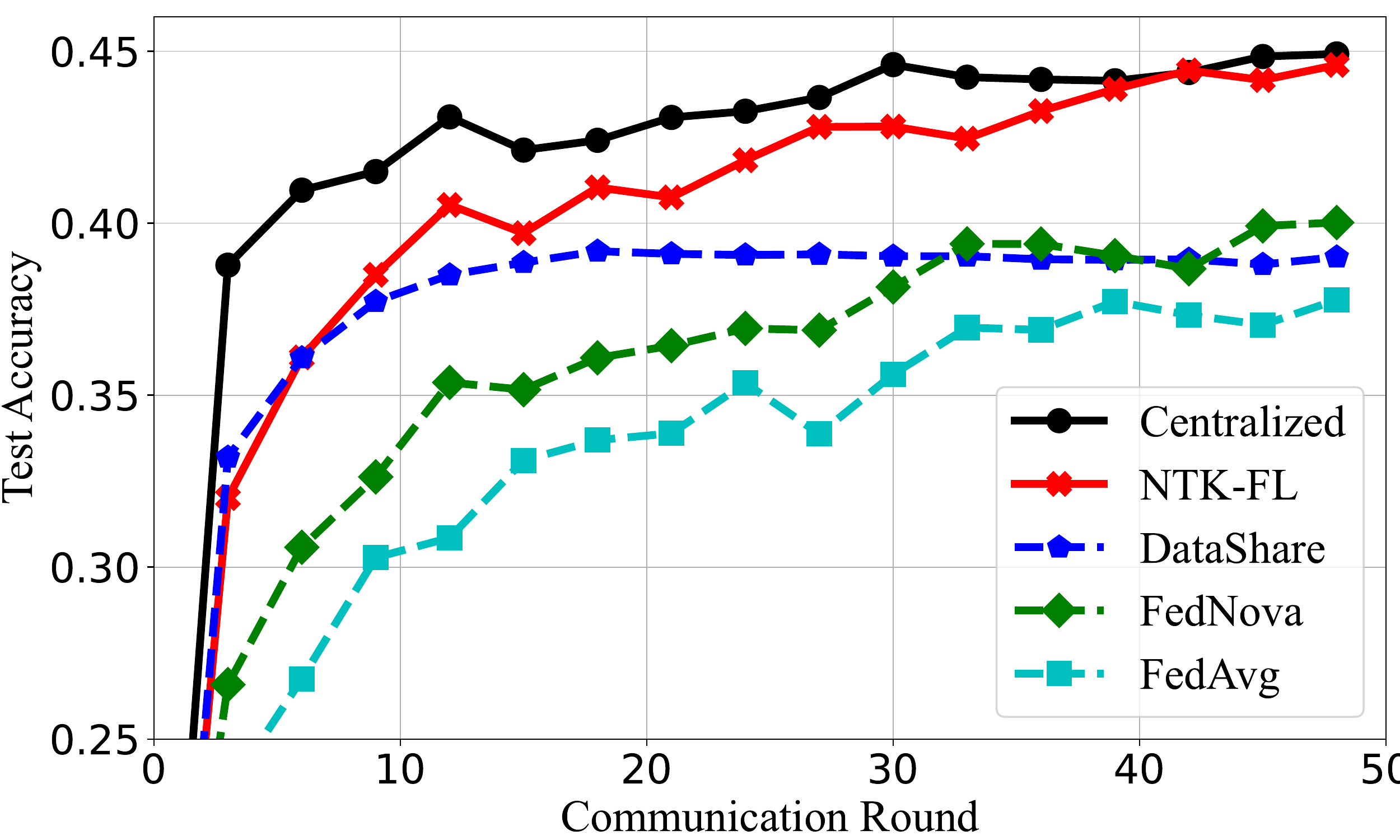}
\vspace{-15pt}
\caption{Test accuracy versus communication round of different methods evaluated on the non-IID CIFAR-10 dataset, where the Dirichlet distribution parameter $\alpha$ is set to $0.1$.  \label{fig:cifar}}
\end{wrapfigure}

We give the detailed setting of the learning rate and batch size.
For the learning rate $\eta$, we search over the set $\{10^{-3}, 3\times 10^{-3}, 10^{-2}, 3\times 10^{-2}, 10^{-1} \}$. 
The learning rate is fixed during the training. 
For the client batch size, we set it to $200$ for all datasets. 
We use the same setup as in Section~\ref{section:experiments}. 
We evaluate different methods, including the centralized training simulation, data sharing method \citep{zhao2018federated}, FedNova \citep{wang2020tackling}, FedAvg \citep{mcmahan2017communication}, and the proposed NTK-FL on the non-IID CIFAR-10 dataset \citep{krizhevsky2009learning} and present the results in \paperfig{fig:cifar}.
NTK-FL outperforms other FL algorithms and shows test accuracy close to the centralized simulation. 
The observation is consistent with the results in \paperfig{fig:learning_curves}.
The implementation is available at \url{https://github.com/KAI-YUE/ntk-fed}. 

\label{section:cifar_experiments}

\section{Reconstruction Attack}
In the following experiment, we measure the privacy loss when using Jacobian sparsification under the data reconstruction attack.
We compress the Jacobian tensor and perform the deep leakage attack~\citep{zhu2019deep}. 
The sparsity levels and image structural similarity index measure (SSIM)~\citep{gonzalez2014digital} are shown in Figure~\ref{fig:dlg}.
When the sparsity decreases, the reconstructions are closer to original images, which is consistent with~\cite{zhu2019deep}.  
The sparsity is set above $80\%$--$90\%$ when using the Top-$k$ sparsification approach~\citep{aji2017sparse}, where the privacy loss becomes difficult to quantify. 
A solid privacy-protection study is nontrivial and we leave it for future work.
\begin{figure}[!ht]
  \begin{overpic}[width=\linewidth, height=100pt]{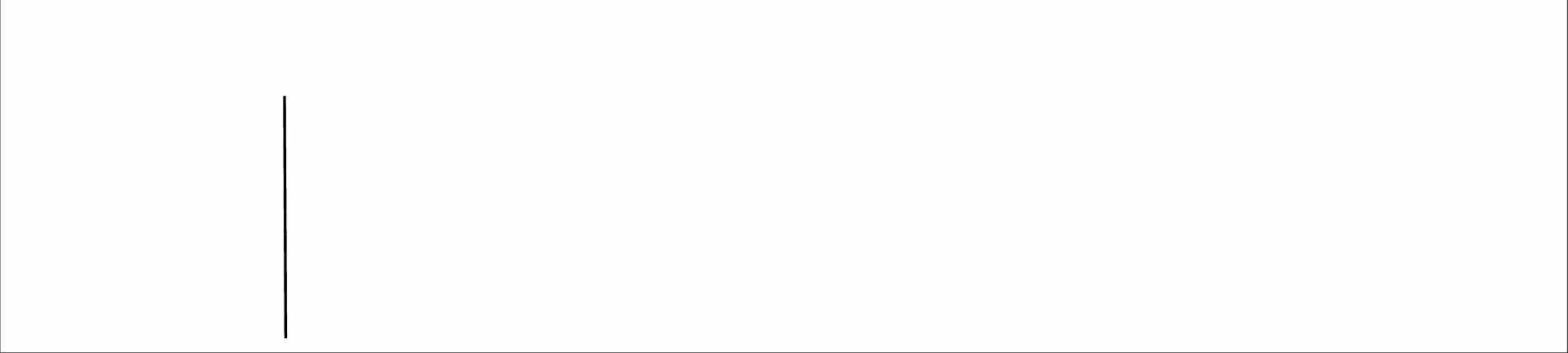}
  \put(0.5, 0.3){\includegraphics[width=0.142\textwidth]{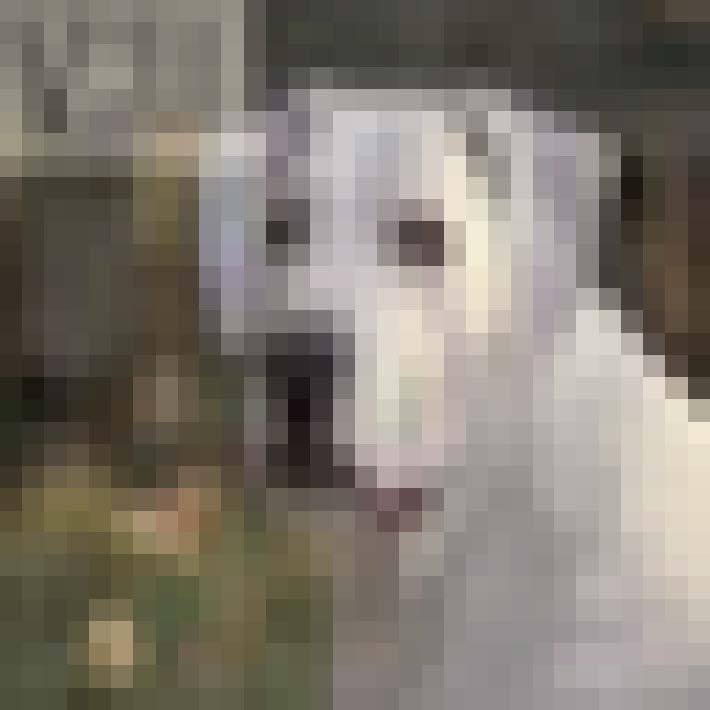}}
  \put(19.5, 0.3){\includegraphics[width=0.142\textwidth]{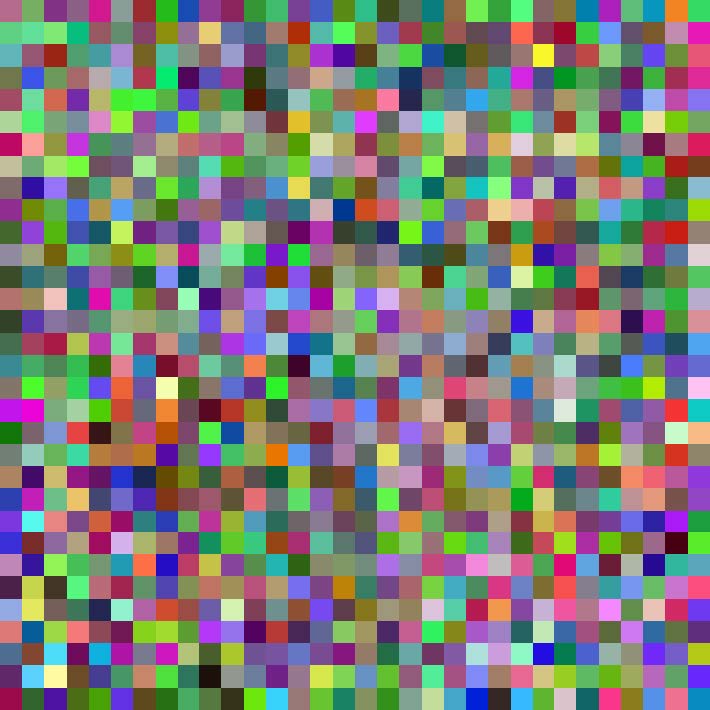}}
  \put(36, 0.3){\includegraphics[width=0.142\textwidth]{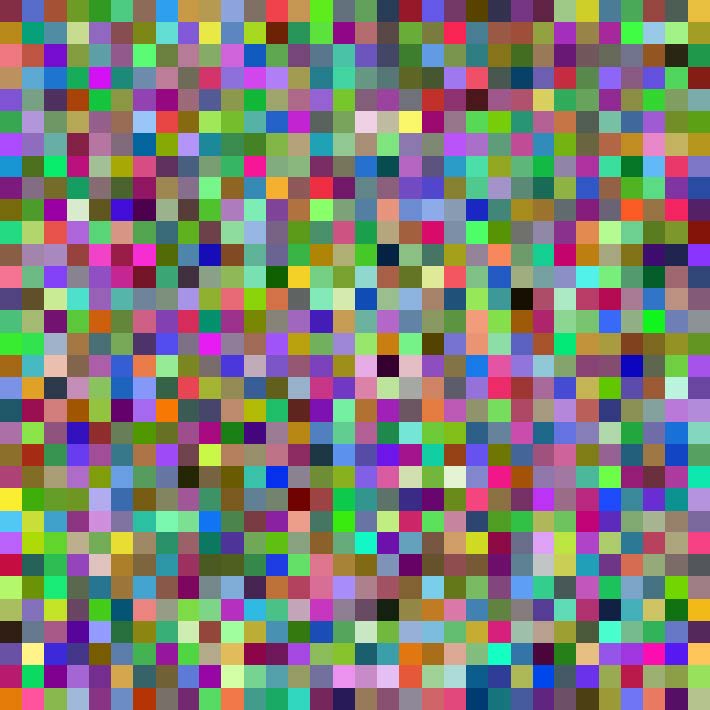}}
  \put(52.5, 0.3){\includegraphics[width=0.142\textwidth]{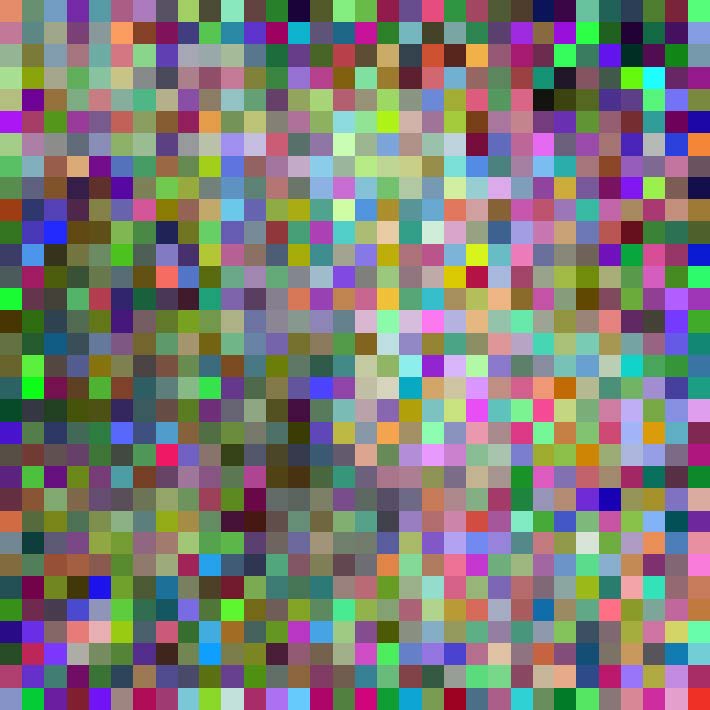}}
  \put(69, 0.3){\includegraphics[width=0.142\textwidth]{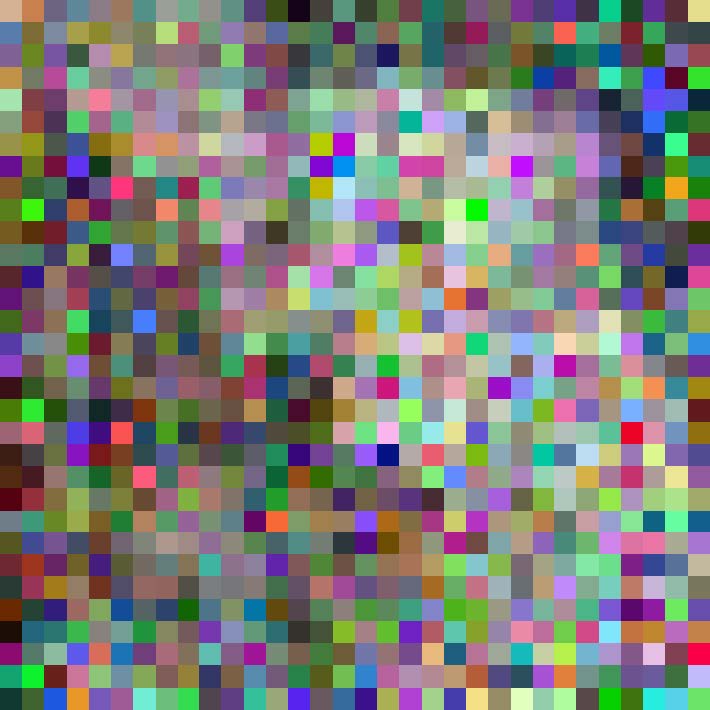}}
  \put(85.5, 0.3){\includegraphics[width=0.142\textwidth]{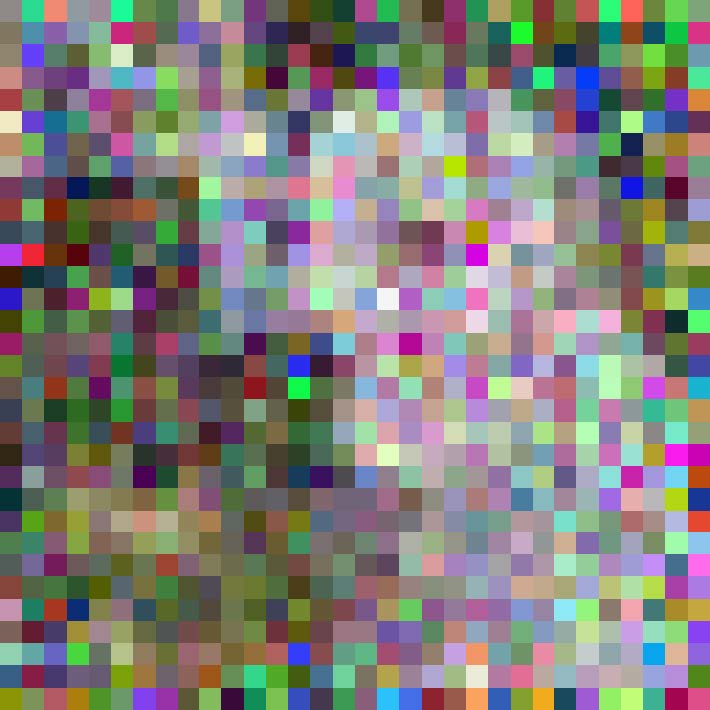}}

  \put(2.5,18){\intab[1]{original}}

  \put(14, 20){\intab[1]{Sparsity}}
  \put(15, 17){\intab[1]{SSIM}}

  \put(24.5, 20){\intab[1]{$90\%$ }}
  \put(24.5, 17){\intab[1]{$0.06$ }}

  \put(41, 20){\intab[1]{$80\%$ }}
  \put(41, 17){\intab[1]{$0.23$ }}

  \put(57, 20){\intab[1]{$70\%$ }}
  \put(57, 17){\intab[1]{$0.35$ }}

  \put(74, 20){\intab[1]{$60\%$ }}
  \put(74, 17){\intab[1]{$0.47$ }}

  \put(90, 20){\intab[1]{$50\%$ }}
  \put(90, 17){\intab[1]{$0.65$ }}
\end{overpic}
\vspace*{-18pt}
\caption{
  Attack results under different sparsity levels.  
  \label{fig:dlg}}
\end{figure}\vspace*{-15pt}
\label{section:attack}

\section{The Number of Update Steps}
We explain below the reason that the optimal number of update steps, $t^{(k)}$,  must not be too large in \eqref{eq:t_optim}.
The weight $\w[k,t]$ is generated by the NTK method in \eqref{eq:w_evolution}. 
Let $\tw[k,t]$ denote the weight generated by the gradient descent method. 
For the two-layer neural network model given in \eqref{eq:relunet_func}, we will show the upper bound on the gap between $\w[k,t]$ and $\tw[k,t]$. 
To this end, we first give \paperlemma{lemma:bound_si} as a prerequisite. 
Define $\sset_i^{(k)}$ as the set of indices corresponding to neurons whose activation patterns are different from the broadcast version $\vecv[k]$ for an input $\x_i$:
\begin{equation}
    \sset_i^{(k)} \triangleq \left\{r\in [n] \ \big|\ \exists\; \vecv[k], \|\vecv[k] - \vecv[k]_r\|_2 \leqslant R, \, \indi[k]_{ir} \neq \ind[\langle \vecv^{(k)}, \x_i\rangle \geqslant 0] \right\}. 
\end{equation}
We upper bound the cardinality of $\sset_i^{(k)}$ in \paperlemma{lemma:bound_si}. 

\begin{Lemma}\label{lemma:bound_si}
    Under Assumptions \ref{assumption:initialization} to \ref{assumption:input}, with probability at least $1-\delta$, we have 
    \begin{equation}
        |\sset_i^{(k)}| \leqslant  \sqrt{\frac{2}{\pi} } \frac{n R }{\delta \alpha_{k}}, \quad \forall\; i \in [N_k].  
    \end{equation}
\end{Lemma}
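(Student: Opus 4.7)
The plan is to reduce the event $r \in \sset_i^{(k)}$ to a simple anti-concentration statement about the Gaussian projection $\langle \vecv[k]_r, \x_i\rangle$, and then combine linearity of expectation with Markov's inequality.

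First, I would show the geometric reduction: if $r \in \sset_i^{(k)}$, then $|\langle \vecv[k]_r, \x_i\rangle| \leqslant R$. Indeed, if the activation pattern flips from the sign of $\langle \vecv[k]_r, \x_i\rangle$ to the opposite sign for some $\vecv[k]$ with $\|\vecv[k] - \vecv[k]_r\|_2 \leqslant R$, then by the Cauchy--Schwarz inequality together with $\|\x_i\|_2 = 1$ (\paperassumption{assumption:input}),
\begin{equation*}
    |\langle \vecv[k]_r, \x_i\rangle| \leqslant |\langle \vecv[k], \x_i\rangle - \langle \vecv[k]_r, \x_i\rangle| \leqslant \|\vecv[k] - \vecv[k]_r\|_2 \cdot \|\x_i\|_2 \leqslant R,
\end{equation*}
because $\langle \vecv[k], \x_i\rangle$ and $\langle \vecv[k]_r, \x_i\rangle$ have opposite signs (or one is zero). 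Hence $\ind[r \in \sset_i^{(k)}] \leqslant \ind[\,|\langle \vecv[k]_r, \x_i\rangle| \leqslant R\,]$.

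Next, I would bound the single-neuron probability. Under \paperassumption{assumption:initialization}, $\vecv[k]_r \sim \mathcal{N}(0,\Sigma_k)$, so $\langle \vecv[k]_r, \x_i\rangle$ is a centered Gaussian with variance $\x_i^\top \Sigma_k \x_i \geqslant \lambda_{\min}(\Sigma_k) \geqslant \alpha_k^2$. The density of this Gaussian at $0$ is $1/\sqrt{2\pi \x_i^\top\Sigma_k\x_i} \leqslant 1/(\sqrt{2\pi}\,\alpha_k)$, and the density is maximized at $0$, so integrating over the interval $[-R,R]$ yields
\begin{equation*}
    \prob\bigl(|\langle \vecv[k]_r, \x_i\rangle| \leqslant R\bigr) \leqslant \frac{2R}{\sqrt{2\pi}\,\alpha_k} = \sqrt{\frac{2}{\pi}}\,\frac{R}{\alpha_k}.
\end{equation*}

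Summing the indicators over $r \in [n]$ and using linearity of expectation gives $\expect[\,|\sset_i^{(k)}|\,] \leqslant \sqrt{2/\pi}\, n R/\alpha_k$. Finally, Markov's inequality yields
\begin{equation*}
    \prob\!\left(|\sset_i^{(k)}| \geqslant \sqrt{\tfrac{2}{\pi}}\,\tfrac{nR}{\delta\alpha_k}\right) \leqslant \delta,
\end{equation*}
which is the claimed high-probability bound.

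The argument is largely routine; the main subtleties are the geometric reduction in the first step (correctly translating ``there exists a nearby $\vecv[k]$ with opposite activation'' into a single-sided norm bound on $\langle \vecv[k]_r, \x_i\rangle$) and handling the general covariance $\Sigma_k$ via the minimum-eigenvalue assumption rather than assuming isotropic initialization as in the standard Du et al.\ analysis. The extension to ``$\forall\,i \in [N_k]$'' with the same probability reads most naturally as a pointwise statement (one $\delta$ per index) given the form of the bound; otherwise a union-bound adjustment of $\delta$ would be required but would not change the scaling.
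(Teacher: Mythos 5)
Your proposal is correct and follows essentially the same route as the paper's proof: the geometric reduction of the activation-flip event to $|\langle \vecv^{(k)}_r,\x_i\rangle|\leqslant R$, the Gaussian probability bound $\sqrt{2/\pi}\,R/\alpha_k$ using $\lambda_{\min}(\Sigma_k)\geqslant\alpha_k^2$ (the paper writes this via the error function, you via bounding the density at zero, which is the same estimate), and then Markov's inequality on $\sum_r \ind[r\in\sset_i^{(k)}]$. Your explicit Cauchy--Schwarz justification of the reduction and the per-index reading of the $\forall\,i$ quantifier also match how the paper treats these points.
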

\begin{proof}
    To bound $|\sset_i^{(k)}| = \sum\limits_{r=1}^n \ind[r \in \sset_i^{(k)}]$, consider an event $A_{ir}^{(k)}$ defined as follows: 
\begin{equation}\label{eq:event_air}
    A_{ir}^{(k)} \triangleq \left\{\exists\; \vecv[k], \|\vecv[k] - \vecv[k]_r\|_2 \leqslant R, \, \indi[k]_{ir} \neq \ind[\langle \vecv^{(k)}, \x_i \rangle \geqslant 0] \right\}. 
\end{equation}
Clearly, $\ind[r \in \sset_i^{(k)}] = \ind[A_{ir}^{(k)}]$. 
According to \paperassumption{assumption:input}, it can be shown that the event $A_{ir}^{(k)}$ happens if and only if $| (\vecv[k]_r)^\top \x_i | \leqslant R$ based on a geometric argument. 
From \paperassumption{assumption:initialization}, we have $ (\vecv[k]_r)^\top \x_i \sim \mathcal{N}(0, \x_i^\top \Sigma_k \x_i )$. 
The probability of event $A_{ir}^{(k)}$ is 
\begin{subequations}
\begin{align}
    \prob[A_{ir}^{(k)}] & = \prob\left[| (\vecv[k]_r)^\top \x_i| \leqslant R \right] \\
    & = \erf\left(\frac{R}{\sqrt{2 \x_i^\top \Sigma_k \x_i} }\right) \leqslant \sqrt{\frac{2}{\pi} } \frac{R}{\alpha_{k}},  \label{eq:prob_air_bound}
\end{align}
\end{subequations}
where the error function is given by $\erf(z) = \frac{2}{\sqrt{\pi}} \int_{0}^{z} e^{-t^{2}} \diff t$.
By Markov's inequality, we have with probability at least $1-\delta$, 
\begin{equation}\label{eq:interm_bound}
    \sum_{r=1}^n \ind[r \in \sset_i^{(k)}] \leqslant  \sqrt{\frac{2}{\pi} } \frac{n R }{\delta \alpha_{k}}. 
\end{equation}
The proof is complete.
\end{proof}\vspace*{-1em}

\begin{Lemma}\label{lemma:gap_ntk_gd}
Consider the residual term $\vecr[k,u]=\y^{(k)} - f^{(k,u)}(\X^{(k)})$. 
Suppose $\forall i \in [N_k]$, $|r^{(k,u)}_i|<\gamma^{(k,u)}$, where $\gamma^{(k,u)}$ is a positive constant and $\lim_{u \rightarrow \infty} \gamma^{(k,u)} = 0$. 
With probability at least $1-\delta$, we have
\begin{equation}
    \|\w[k, t] - \tw[k, t] \|_1 \leqslant \frac{\sqrt{2 n d_1 }  \eta }{\sqrt{\pi} \delta \alpha_{k}} \sum_{u=1}^{t-1} \gamma^{(k, u)}. 
\end{equation} 
\end{Lemma}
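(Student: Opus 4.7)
The strategy is to exploit that, for the two-layer ReLU network \eqref{eq:relunet_func}, the per-sample first-layer Jacobian $\nabla_{\vecv_r} f = \frac{c_r}{\sqrt{n}}\,\x\,\mathbbm{1}[\langle\vecv_r,\x\rangle \geq 0]$ depends on the weight only through the activation indicator. The NTK evolution in \eqref{eq:w_evolution} freezes these indicators at the broadcast values $\vecv[k]_r$, whereas gradient descent uses indicators that evolve with the drifting iterate $\tilde{\vecv}^{(k,u)}_r$. Hence the entire gap $\w[k,t]-\tw[k,t]$ is driven by activation-pattern flips, which are precisely what \paperlemma{lemma:bound_si} counts.

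First, I would unroll both recursions starting from the common initialization $\vecv[k]_r$: both the NTK per-step increment and the GD per-step increment have the form $\frac{\eta c_r}{N_k\sqrt{n}}\sum_i r^{(k,u)}_i\,\x_i\,\mathbbm{1}[\langle w,\x_i\rangle \geq 0]$, with $w=\vecv[k]_r$ in the NTK case and $w=\tilde{\vecv}^{(k,u)}_r$ in the GD case. Subtracting term-by-term, the $(r,i,u)$ summand vanishes off the event $\{r\in\sset_i^{(k)}\}$, and on that event its contribution to the $\ell_1$ gap is at most $\frac{\eta\gamma^{(k,u)}\sqrt{d_1}}{N_k\sqrt{n}}$, using the residual hypothesis $|r^{(k,u)}_i|<\gamma^{(k,u)}$ together with $\|\x_i\|_1\leq\sqrt{d_1}\|\x_i\|_2=\sqrt{d_1}$ from \paperassumption{assumption:input}.

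Next, I would determine the drift radius $R$ needed to invoke \paperlemma{lemma:bound_si}. Each GD step has $\ell_2$ norm at most $\eta\gamma^{(k,u)}/\sqrt{n}$ (since $|c_r|=1$, $\|\x_i\|_2=1$, and averaging over $N_k$ samples cancels the $N_k$ factor), so $\|\tilde{\vecv}^{(k,u)}_r-\vecv[k]_r\|_2\leq \frac{\eta}{\sqrt{n}}\sum_{s<u}\gamma^{(k,s)}$. Plugging this $R$ into \paperlemma{lemma:bound_si} yields $|\sset_i^{(k)}|\leq\sqrt{2/\pi}\,nR/(\delta\alpha_k)$ with probability at least $1-\delta$, uniformly in $i$. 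Summing the per-term bound over $r\in\sset_i^{(k)}$, $i\in[N_k]$, and $u\in[t-1]$, the factors of $N_k$, $\sqrt{n}$, and $n$ combine to leave exactly the stated bound $\frac{\sqrt{2nd_1}\,\eta}{\sqrt{\pi}\,\delta\,\alpha_k}\sum_{u=1}^{t-1}\gamma^{(k,u)}$.

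The main obstacle is the circularity between $R$ and the GD residuals: the hypothesis controls $|r^{(k,u)}_i|$ (the NTK residual), whereas the step-size bound used for $R$ is governed by the GD residual $|\tilde r^{(k,u)}_i|$. I would close this by an induction on $u$: while both weight sequences remain in a common small ball around $\vecv[k]_r$, the two residual sequences agree up to higher-order corrections in $R$, so the bound $|\tilde r^{(k,u)}_i|\leq\gamma^{(k,u)}$ is inherited from the hypothesis with a constant-factor loss that is absorbed into the final inequality. The decay condition $\lim_{u\to\infty}\gamma^{(k,u)}=0$ ensures the inductive ball stays small, so the linear-in-$\sum\gamma^{(k,u)}$ rate is preserved.
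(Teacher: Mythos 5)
Your skeleton matches the paper's proof: unroll both weight sequences, note that the per-(sample, neuron, step) difference is supported on flipped activation indicators, count the flips with \paperlemma{lemma:bound_si}, and use $|r^{(k,u)}_i|<\gamma^{(k,u)}$ together with $\|\x_i\|_1\leqslant\sqrt{d_1}\|\x_i\|_2=\sqrt{d_1}$. But two of your steps do not deliver the lemma as stated. First, the algebra after substituting your self-derived drift radius is wrong: plugging $R=\tfrac{\eta}{\sqrt{n}}\sum_{s<u}\gamma^{(k,s)}$ into \paperlemma{lemma:bound_si} and carrying out the sum over $r\in\sset_i^{(k)}$, $i\in[N_k]$, $u\in[t-1]$ makes the $\sqrt{n}$ cancel and produces $\sqrt{2/\pi}\,\eta^{2}\sqrt{d_1}\,\bigl(\sum_{u}\gamma^{(k,u)}\bigr)^{2}/(\delta\alpha_k)$, not $\tfrac{\sqrt{2nd_1}\,\eta}{\sqrt{\pi}\,\delta\alpha_k}\sum_{u}\gamma^{(k,u)}$; the two coincide only under the additional, unstated condition $\eta\sum_u\gamma^{(k,u)}\leqslant\sqrt{n}$, so the claim that the factors ``combine to leave exactly the stated bound'' is false. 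The paper never pins $R$ to the dynamics: it keeps the radius from the definition of $\sset_i^{(k)}$ and bounds $|\sset_i^{(k)}|$ by $\sqrt{2/\pi}\,n/(\delta\alpha_k)$ in the last step, which is precisely what yields the $\sqrt{2nd_1}$ constant in the statement.

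Second, your resolution of the NTK-versus-GD residual circularity is only a sketch. An induction whose ``constant-factor loss is absorbed into the final inequality'' cannot prove this lemma, because the statement carries an exact constant; and you give no quantitative argument that the GD residuals $|\tilde{r}^{(k,u)}_i|$ stay below $\gamma^{(k,u)}$ (or below an explicit multiple of it) while the iterates remain in the ball. The paper sidesteps the issue entirely: it writes both $\w[k,t]$ and $\tw[k,t]$ as unrolled sums driven by the \emph{same} residual sequence $\y^{(k)}-f^{(k,u)}(\X^{(k)})$, so that the two expressions differ only in the Jacobian (frozen $\nabla f^{(k,0)}$ versus evolving $\nabla f^{(k,u)}$) and the hypothesis $|r^{(k,u)}_i|<\gamma^{(k,u)}$ applies verbatim to the single difference term $\bigl\|\tfrac{\eta}{N_k}\sum_{u}[\nabla f^{(k,u)}(\X^{(k)})-\nabla f^{(k,0)}(\X^{(k)})][\y^{(k)}-f^{(k,u)}(\X^{(k)})]\bigr\|_1$. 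To reproduce the stated bound you should follow that route (or else state and prove the extra conditions your tighter bound needs), rather than deriving $R$ from the GD trajectory and invoking an uncontrolled induction.
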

\begin{proof}
The weights $\w[k,t]$ and $\tw[k,t]$ can be written as 
\begin{align}
\w[k, t] = \frac{\eta}{N_k} \sum_{u=0}^{t-1} \nabla f^{(k,0)}(\X^{(k)}) \left[\y^{(k)} - f^{(k,u)}(\X^{(k)})\right]  + \w[k,0],  \\
\tw[k, t] = \frac{\eta}{N_k} \sum_{u=0}^{t-1} \nabla f^{(k,u)}(\X^{(k)}) \left[\y^{(k)} - f^{(k,u)}(\X^{(k)})\right] + \w[k,0]. 
\end{align}
The $\ell_1$ norm of the difference between $\w[k,t]$ and $\tw[k,t]$ is
\begin{subequations}
\begin{align}
    \|\w^{(k, t)} - \tw^{(k, t)} \|_1 &= \left\| \frac{\eta}{N_k} \sum_{u=1}^{t-1} \left[\nabla f^{(k,u)}(\X^{(k)}) -  \nabla f^{(k,0)}(\X^{(k)}) \right] \left[\mathbf{y}^{(k)} - f^{(k,u)}(\X^{(k)})\right] \right\|_1 \\
    & \leqslant \frac{\eta}{N_k} \sum_{u=1}^{t-1} \gamma^{(k,u)} \sum_{i=1}^{N_k} \left\|\frac{1}{\sqrt{n}} \sum_{r=1}^n c_r \x_i (\ind_{ir}^{(k,t)} - \ind_{ir}^{(k,0)}) \right\|_1 \\
    & \leqslant \frac{\eta}{N_k} \sum_{u=1}^{t-1} \gamma^{(k,u)} \sum_{i=1}^{N_k} \sum_{r=1}^{n} \frac{\sqrt{d_1}}{\sqrt{n}} |\ind_{ir}^{(k,t)} - \ind_{ir}^{(k,0)}| \\
    & = \frac{\eta}{N_k} \sum_{u=1}^{t-1} \gamma^{(k,u)} \sum_{i=1}^{N_k} \sum_{r=1}^{n} \frac{\sqrt{d_1}}{\sqrt{n}} \ind(r \in \sset^{(k)}_i) \\
    & \leqslant \frac{\sqrt{2n d_1 }  \eta }{\sqrt{\pi} \delta \alpha_k} \sum_{u=1}^{t-1} \gamma^{(k,u)}. \label{eq:upper_bound}
\end{align} 
\end{subequations} 
\end{proof}

The theoretical result indicates that the difference between the NTK weight $\w[k,t]$ and gradient descent weight $\tw[k,t]$ increases with the number of update steps $t$. 
We validate our theoretical result using real experiments and report the results in \paperfig{fig:w_diff}. 
The consistent trend between the empirical weight difference and the analytical upper bound confirms our analysis that increasing the number of update steps enlarges the gap between $\w[k,t]$ and $\tw[k,t]$.
In the NTK evolution scheme, one cannot choose an arbitrarily large $t$ as the final number of update step $t^{(k)}$. 

\begin{figure}
    \centering
    \subcaptionbox{}[0.45\textwidth]{
        \includegraphics[width=\linewidth]{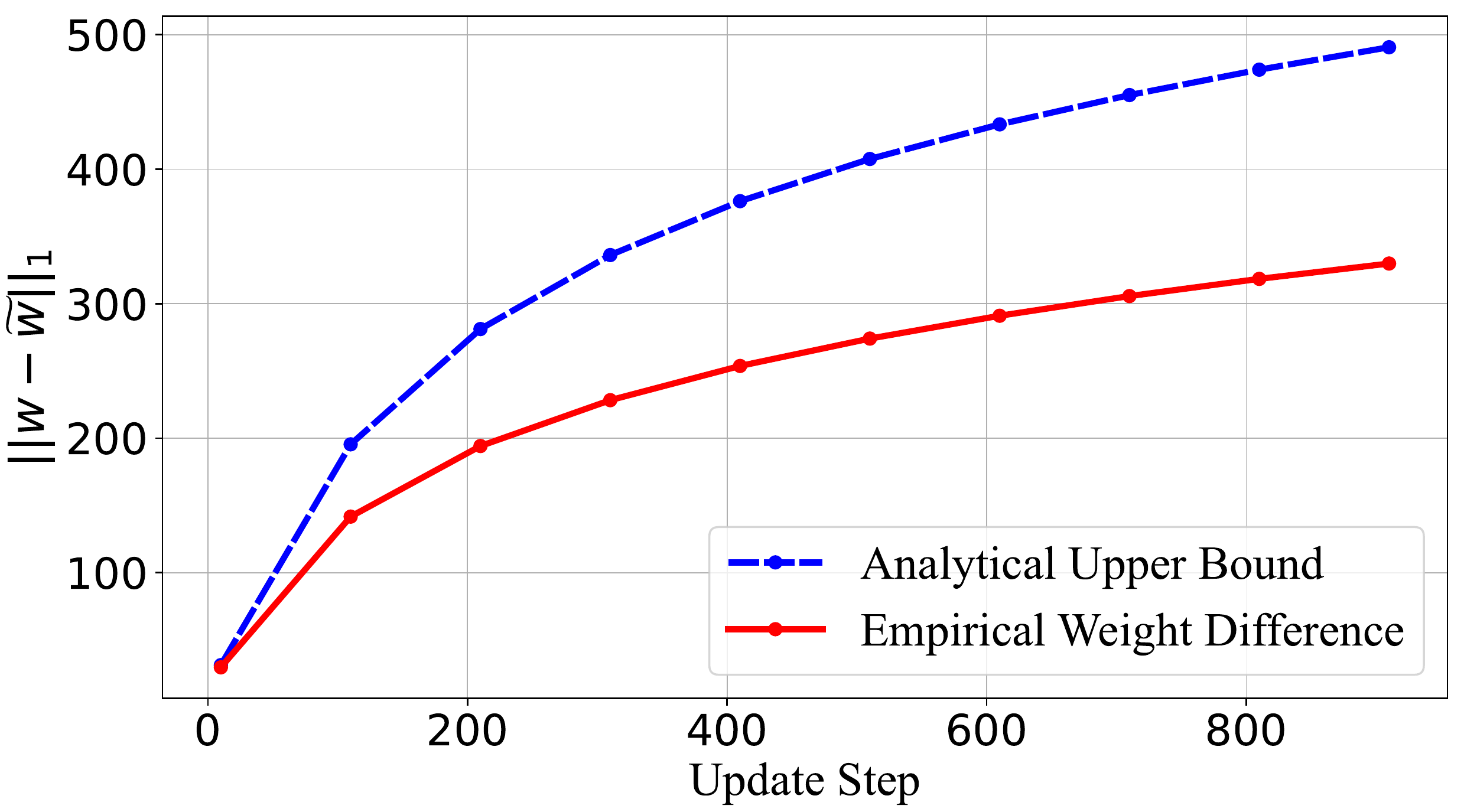}
    }
    \subcaptionbox{}[0.45\textwidth]{
        \includegraphics[width=\linewidth]{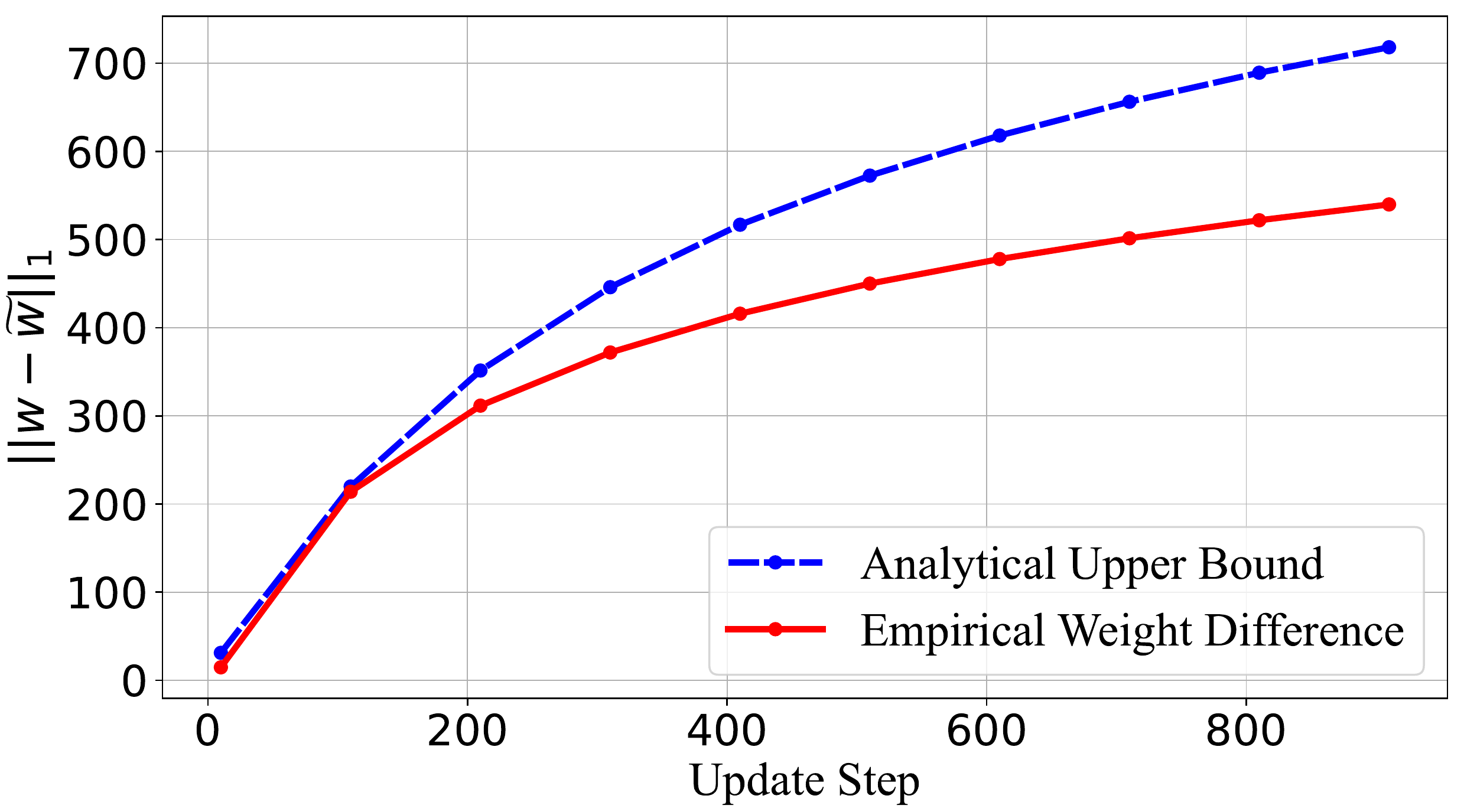}
    }
    \caption{The $\ell_1$ norm of the difference between the NTK weight $\w[k,t]$ and gradient descent weight $\tw[k,t]$ on the (a)~Fashion-MNIST dataset and (b)~FEMNIST dataset. 
    For the analytical upper bound, we calculate the term  $\sum_{u} \gamma^{(k,u)}$ in \eqref{eq:upper_bound} and set the coefficient to $2$.
    Both the theoretical result and real experiments show that the weight difference increases when more update steps are used.
    \label{fig:w_diff}} 
\end{figure}
\label{section:update_step}

\section{Proof of \papertheorem{th:ntkfl}}\label{section:proof_ntkfl}
We first present some lemmas to facilitate the convergence analysis.  
We bound the perturbation of the kernel matrix $\kernel[k,t]$ in \paperlemma{lemma:bound_kernel_time}. 
\begin{Lemma}\label{lemma:bound_kernel_time}
    Under Assumptions \ref{assumption:initialization} to \ref{assumption:input}, if $\forall\; r \in [n]$, $\|\vecv[k,t]_{r} - \vecv[k]_{r} \|_2 \leqslant R$, 
    then 
    \begin{equation}
        \| \kernel[k,t] - \kernel[k] \|_2 \leqslant \frac{2 \sqrt{2} N_k R}{\sqrt{\pi} \delta \alpha_k}. 
    \end{equation} 
\end{Lemma}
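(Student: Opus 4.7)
\textbf{Proof proposal for \paperlemma{lemma:bound_kernel_time}.}
The plan is to bound the perturbation entry-wise using the already-established bound on $|\sset_i^{(k)}|$ from \paperlemma{lemma:bound_si}, and then convert the entry-wise bound to an operator-norm bound by a standard matrix-norm inequality. No fresh probabilistic machinery is needed beyond what \paperlemma{lemma:bound_si} already provides; the argument is essentially an $\ell_\infty \to \ell_2$ lift.

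First, I would write out the entry-wise difference using the closed form for the kernel entries,
\begin{equation*}
(\kernel[k,t] - \kernel[k])_{ij} = \frac{1}{n} \x_i^\top \x_j \sum_{r=1}^n \left( \indi[k,t]_{ir}\indi[k,t]_{jr} - \indi[k]_{ir}\indi[k]_{jr} \right),
\end{equation*}
and observe that under the hypothesis $\|\vecv[k,t]_r - \vecv[k]_r\|_2 \leqslant R$, each summand vanishes unless the ReLU pattern has flipped for $\x_i$ or for $\x_j$ at neuron $r$. Since $\indi[k,t]_{ir}\indi[k,t]_{jr}$ and $\indi[k]_{ir}\indi[k]_{jr}$ are both in $\{0,1\}$, their difference has magnitude at most $\ind[r \in \sset_i^{(k)}] + \ind[r \in \sset_j^{(k)}]$.

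Second, using \paperassumption{assumption:input} to conclude $|\x_i^\top \x_j| \leqslant 1$ and invoking \paperlemma{lemma:bound_si} for each index simultaneously, I would obtain the uniform entry-wise bound
\begin{equation*}
\max_{i,j \in [N_k]} \left| (\kernel[k,t] - \kernel[k])_{ij} \right| \leqslant \frac{1}{n}\left( |\sset_i^{(k)}| + |\sset_j^{(k)}| \right) \leqslant \frac{2\sqrt{2}\, R}{\sqrt{\pi}\,\delta\,\alpha_k}.
\end{equation*}

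Third, I would pass from the entry-wise bound to the spectral norm via the textbook inequality $\|A\|_2 \leqslant \sqrt{\|A\|_1 \|A\|_\infty}$ applied to the $N_k \times N_k$ symmetric matrix $\kernel[k,t] - \kernel[k]$. Each of $\|A\|_1$ and $\|A\|_\infty$ is bounded by $N_k \cdot \max_{ij}|A_{ij}|$, so $\|A\|_2 \leqslant N_k \max_{ij}|A_{ij}|$, which produces precisely the stated bound $\frac{2\sqrt{2}\,N_k R}{\sqrt{\pi}\,\delta\,\alpha_k}$.

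The only mildly delicate point, which is really the main thing to be careful about, is the probabilistic nature of the $|\sset_i^{(k)}|$ bound: \paperlemma{lemma:bound_si} gives the bound for each fixed $i$ with probability $1-\delta$, and I would rely on the fact (implicitly used in the statement of \paperlemma{lemma:bound_si}) that this holds uniformly in $i \in [N_k]$, so that the maximum over $(i,j)$ in the entry-wise bound is controlled on the same good event. Beyond this bookkeeping, the argument is a direct computation with no further obstacle.
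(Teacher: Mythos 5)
Your proof is correct and takes essentially the same route as the paper's: bound each entry of $\kernel[k,t]-\kernel[k]$ by $\frac{2\sqrt{2}R}{\sqrt{\pi}\delta\alpha_k}$ via the activation-flip argument (using $|\x_i^\top\x_j|\leqslant 1$), then pick up the factor $N_k$ through a matrix-norm inequality. The only cosmetic differences are that the paper re-applies Markov's inequality to $\expect\big[|\indi[k,t]_{ir}\indi[k,t]_{jr}-\indi[k]_{ir}\indi[k]_{jr}|\big]$ for each triple rather than reusing \paperlemma{lemma:bound_si} on $|\sset_i^{(k)}|$ as you do, and it passes to the spectral norm through the Frobenius norm rather than through $\sqrt{\|A\|_1\|A\|_\infty}$ — both give the same $N_k\max_{ij}|A_{ij}|$ factor, and your acknowledged reliance on the uniform-in-$i$ reading of \paperlemma{lemma:bound_si} is no looser than the paper's own probabilistic bookkeeping.
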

\begin{proof}
    We have 
\begin{subequations}
\begin{align}
    \| \kernel[k,t] - \kernel[k] \|^2_2 & \leqslant \| \kernel[k,t] - \kernel[k] \|^2_{\textrm{F}} \\
    & = \sum_{i=1}^{N_k}\sum_{j=1}^{N_k} \left[ (\kernel[k,t])_{ij} - (\kernel[k])_{ij} \right]^2 \\
    & = \frac{1}{n^2} \sum_{i=1}^{N_k}\sum_{j=1}^{N_k} (\x_i^\top \x_j)^2 \left( \sum_{r=1}^n \indi[k,t]_{ir} \indi[k,t]_{jr} -  \indi[k]_{ir} \indi[k]_{jr} \right)^2. \label{eq:kernel_perturb_bound}
\end{align}
\end{subequations}
Consider the event $A_{ir}$ defined in \eqref{eq:event_air}. 
Let $\phi^{(k,t)}_{ijr} \triangleq  \indi[k,t]_{ir} \indi[k,t]_{jr} -  \indi[k]_{ir} \indi[k]_{jr}$. 
If $\neg A_{ir}$ and $\neg A_{jr}$ happen, clearly we have $|\phi^{(k,t)}_{ijr}| = 0$. Therefore, the expectation of $|\phi^{(k,t)}_{ijr}|$ can be bounded as 
\begin{subequations}
\begin{align}
    \expect[\left|\phi^{(k,t)}_{ijr}\right|] & 
    \leqslant \prob( A_{ir} \cup A_{jr}) \cdot 1  \\
    & \leqslant \prob(A_{ir}) + \prob(A_{jr})  \\ 
    & \overset{\cirone}{\leqslant} 2 \sqrt{\frac{2}{\pi} } \frac{R }{\alpha_k}, 
\end{align}
\end{subequations} 
where $\cirone$ comes from \eqref{eq:prob_air_bound}. 
By Markov's inequality, we have with probability at least $1-\delta$, 
\begin{equation}\label{eq:bound_phi}
    |\phi^{(k,t)}_{ijr}| \leqslant 2 \sqrt{\frac{2}{\pi} } \frac{R }{\delta \alpha_k}. 
\end{equation}
Plugging \eqref{eq:bound_phi} into \eqref{eq:kernel_perturb_bound} yields 
\begin{equation}
    \| \kernel[k,t] - \kernel[k] \|^2_2 
    \leqslant \frac{ N_k^2}{n^2} \frac{8 n^2 R^2}{\pi \delta^2 \alpha_k^2} = \frac{8 N_k^2 R^2 }{\pi \delta^2 \alpha_k^2}. 
\end{equation}
Taking the square root on both sides completes the proof. 
\end{proof}

\begin{Lemma}\label{lemma:bound_kernel_var}
With probability at least $1 - \delta$, 
\begin{equation}\label{eq:bound_kernel_var}
    \| \kernel[k] - \kernelinf \|_2 \leqslant N_k  \sqrt{\frac{\ln \left(2 N_k^2 / \delta\right)}{2 n}}. 
\end{equation}
\end{Lemma}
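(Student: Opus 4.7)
The plan is to reduce the spectral-norm bound to an entrywise concentration inequality followed by a union bound. The key observation is that each entry of $\kernel[k]$ is an empirical average over the $n$ i.i.d.\ neurons indexed by $r$, while the corresponding entry of $\kernelinf$ is the population mean. Concretely, writing $Z_{ij,r} \triangleq \x_i^\top \x_j \, \indi[k]_{ir} \indi[k]_{jr}$, we have $(\kernel[k])_{ij} = \frac{1}{n}\sum_{r=1}^n Z_{ij,r}$ and $(\kernelinf)_{ij} = \mathbb{E}[Z_{ij,r}]$, with the $Z_{ij,r}$ i.i.d.\ across $r$ at fixed $(i,j)$.

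First I would bound the summand. By \paperassumption{assumption:input} and Cauchy--Schwarz, $|\x_i^\top \x_j| \leqslant 1$, and the indicator product lies in $\{0,1\}$, so each $Z_{ij,r}$ takes values in $[-1, 1]$ (in fact in an interval of length at most $1$). Hoeffding's inequality then gives, for every fixed $(i,j)$,
\begin{equation}
    \prob\!\left[\left|(\kernel[k])_{ij} - (\kernelinf)_{ij}\right| > \varepsilon \right] \leqslant 2 \exp\!\left(-2 n \varepsilon^2\right).
\end{equation}
Setting $\varepsilon = \sqrt{\ln(2 N_k^2/\delta)/(2n)}$ makes the right-hand side equal to $\delta/N_k^2$.

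Next I would apply a union bound over the $N_k^2$ entries of the (symmetric) kernel matrix, yielding with probability at least $1-\delta$ the simultaneous entrywise bound $\max_{i,j}|(\kernel[k])_{ij} - (\kernelinf)_{ij}| \leqslant \sqrt{\ln(2N_k^2/\delta)/(2n)}$. Finally, to pass from entrywise control to a spectral-norm bound, I would use the standard chain $\|\mathbf{A}\|_2 \leqslant \|\mathbf{A}\|_{\textrm{F}} \leqslant N_k \max_{i,j}|A_{ij}|$ valid for any $N_k \times N_k$ matrix, which immediately delivers \eqref{eq:bound_kernel_var}.

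I do not expect a serious obstacle: the argument is essentially a clean Hoeffding-plus-union-bound. The only subtlety worth double-checking is the boundedness range of $Z_{ij,r}$, since the Hoeffding constant $2n\varepsilon^2$ depends on the squared length of the interval containing the summands; using $|Z_{ij,r}| \leqslant 1$ gives length $2$ and yields exactly the stated constant. If one wanted a slightly tighter result, one could exploit that $Z_{ij,r} \in [0,1]$ when $\x_i^\top \x_j \geqslant 0$ and in $[-1,0]$ otherwise, but that refinement is not needed for the advertised bound.
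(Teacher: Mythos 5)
Your proof takes essentially the same route as the paper: entrywise Hoeffding concentration of the empirical kernel entries around $(\kernelinf)_{ij}$, a union bound over the $N_k^2$ entries, and then $\|\cdot\|_2 \leqslant \|\cdot\|_{\mathrm{F}} \leqslant N_k \max_{i,j}|\cdot|$, which is exactly the paper's argument. One small correction to your closing remark: the tail $2\exp(-2n\varepsilon^2)$ needs each summand $Z_{ij,r}\in\{0,\x_i^\top\x_j\}$ to lie in an interval of length at most $1$ (the observation you make parenthetically earlier), whereas taking only $|Z_{ij,r}|\leqslant 1$, i.e.\ the range $[-1,1]$ of length $2$, would give the weaker $2\exp(-n\varepsilon^2/2)$ and hence a constant off by a factor of $2$ inside the square root — so the roles of the two ranges in your ``subtlety'' paragraph are reversed, though the paper itself is equally cavalier about this constant.
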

\begin{proof}
We have 
\begin{equation}
    \| \kernel[k] - \kernelinf \|^2_{2} 
    \leqslant \| \kernel[k] - \kernelinf  \|^2_{\textrm{F}} 
    = \sum_{i=1}^{N_k}\sum_{j=1}^{N_k} \left[ (\kernel[k])_{ij} - (\kernelinf)_{ij} \right]^2.  
\end{equation}
Note that $(\kernel[k])_{ij} = \frac{1}{n}  \x_i^\top \x_j \sum\limits_{r=1}^n \indi[k]_{ir} \indi[k]_{jr} $, 
$(\kernel[k])_{ij} \in [-1, 1]$. 
By Hoeffding's inequality, we have with probability at least $1-\delta/N_k^2$,
\begin{equation}
    \left| (\kernel[k])_{ij} - (\kernelinf)_{ij} \right| \leqslant \sqrt{\frac{\ln \left(2 N_k^2 / \delta\right)}{2 n}}. 
\end{equation}
Applying the union bound over $i, j \in [N_k]$ yields 
\begin{equation}
    \| \kernel[k] - \kernelinf \|_{2} \leqslant N_k  \sqrt{\frac{\ln \left(2 N_k^2 / \delta\right)}{2 n}}. 
\end{equation}
The proof is complete. 
\end{proof}


Now we are going to prove \papertheorem{th:ntkfl}. 
\begin{Theorem}
    For the NTK-FL scheme under Assumptions \ref{assumption:initialization} to \ref{assumption:pdkernel}, 
    let the learning rate $\eta=\order{\frac{\lambda_{k}}{N_k}}$ and the neural network width $n = \Omega\left(\frac{ N_k^{2}}{\lambda_{k}^{2}} \ln \frac{ N_k^{2}}{\delta} \right)$, 
    then with probability at least $1-\delta$,
    \begin{equation}
        \normsq{f^{(k+1)}(\X^{(k)})-\y^{(k)}}  \leqslant   \left(1\!-\!\frac{\eta \lambda_{k} }{2 N_k} \right)^{t^{(k)}} \! \normsq{f^{(k)}(\X^{(k)}) - \y^{(k)}}, 
    \end{equation}
    where $t^{(k)}$ is the number of update steps defined in \eqref{eq:t_optim}.
\end{Theorem}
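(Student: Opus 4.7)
The plan is to bound the residual in three steps: (i)~transfer the spectral lower bound from $\kernelinf$ to the server-side empirical kernel $\kernel[k]$, (ii)~use the closed-form NTK evolution to express the (linearized) residual as a matrix-exponential decay of the initial residual, and (iii)~convert that exponential into the stated geometric form via an elementary inequality.

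For step~(i), I would combine \paperlemma{lemma:bound_kernel_var} with the width requirement $n = \Omega(N_k^2/\lambda_k^2 \cdot \ln(N_k^2/\delta))$ to force $\|\kernel[k] - \kernelinf\|_2 \leqslant \lambda_k/2$ with probability at least $1 - \delta/2$, and then apply Weyl's inequality to obtain $\lambda_{\min}(\kernel[k]) \geqslant \lambda_k/2$. For step~(ii), equation~\eqref{eq:f_evolution} identifies the linearized residual $\vecr[k,u] = \y^{(k)} - \neural[k,u](\X^{(k)})$ as $\vecr[k,t^{(k)}] = e^{-\eta t^{(k)} \kernel[k]/N_k} \vecr[k,0]$; bounding the corresponding quadratic form with the eigenvalue estimate from step~(i) yields $\|\vecr[k,t^{(k)}]\|_2^2 \leqslant e^{-\eta t^{(k)} \lambda_k/N_k} \|\vecr[k,0]\|_2^2$. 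For step~(iii), the elementary inequality $e^{-2a} \leqslant 1 - a$ for $a \in [0, 1/2]$, applied with $a = \eta\lambda_k/(2N_k)$ (a condition implied by the learning-rate choice $\eta = \order{\lambda_k/N_k}$), gives $e^{-\eta t^{(k)} \lambda_k/N_k} \leqslant (1 - \eta\lambda_k/(2N_k))^{t^{(k)}}$, completing the geometric bound.

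The main obstacle is the gap between the \emph{linearized} residual controlled above and the true neural-network residual $f^{(k+1)}(\X^{(k)}) - \y^{(k)}$ at the updated weights $\w[k+1] = \w[k, t^{(k)}]$. Closing this gap requires an inductive coupling: I would bound the cumulative weight displacement $\|\w[k,u] - \w[k]\|_2$ using the already-established decay of $\|\vecr[k,u]\|_2$, feed this through \paperlemma{lemma:bound_si} and \paperlemma{lemma:bound_kernel_time} to guarantee that the instantaneous kernel $\kernel[k,u]$ stays within $\lambda_k/4$ of $\kernel[k]$ throughout the round, and then show that the actual-network output tracks its linearized surrogate closely enough not to spoil the geometric decay. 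The overparameterization width $n = \Omega(N_k^2/\lambda_k^2 \cdot \ln(N_k^2/\delta))$ must be split carefully so that both the concentration estimate of step~(i) and this drift bound hold within a single $1 - \delta$ probability event.
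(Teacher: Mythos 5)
Your steps (i)--(iii) are fine as far as they go: \paperlemma{lemma:bound_kernel_var} plus the stated width does give $\lambda_{\min}(\kernel[k])\geqslant \lambda_k/2$ by Weyl, the closed form \eqref{eq:f_evolution} does give $\vecr[k,t]=e^{-\eta t\kernel[k]/N_k}\vecr[k,0]$, and $e^{-2a}\leqslant 1-a$ on $[0,1/2]$ converts the exponential into the geometric factor. But this only controls the \emph{linearized} residual, and the obstacle you yourself flag --- relating it to $f^{(k+1)}(\X^{(k)})-\y^{(k)}$, i.e.\ the actual ReLU network evaluated at the NTK-evolved weights $\w[k,t^{(k)}]$ --- is where the entire content of the theorem lies, and your plan does not close it. Two concrete problems: (a) any coupling of the true trajectory to its linearized surrogate produces an \emph{additive} tracking error (indeed \paperlemma{lemma:gap_ntk_gd} shows the weight-space gap grows with $t$), whereas the claimed bound is purely multiplicative; to absorb the tracking error you would have to show it is dominated by a constant fraction of the current residual, and nothing in your sketch, nor the stated width $n=\Omega(N_k^2\lambda_k^{-2}\ln(N_k^2/\delta))$, delivers that. (b) The drift control you invoke through \paperlemma{lemma:bound_si} and \paperlemma{lemma:bound_kernel_time} presupposes a radius $R$ with $\|\vecv[k,u]_r-\vecv[k]_r\|_2\leqslant R=O(\delta\alpha_k\lambda_k/N_k)$ for all $u\leqslant t^{(k)}$, which must be established by induction jointly with the decay, not assumed; your proposal names this but supplies no argument.

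The paper takes a different route that avoids the coupling altogether: it analyzes the actual nonlinear per-step dynamics in the style of Du et al., decomposing each one-step change $f^{(k,t+1)}(\x_i)-f^{(k,t)}(\x_i)$ into the contribution $\vecd^{\RN{1}}$ of neurons outside $\sset_i^{(k)}$ (whose activation pattern is stable) and $\vecd^{\RN{2}}$ of neurons inside it, bounding the first through the kernel perturbation estimates (\paperlemma{lemma:bound_kernel_time}, \paperlemma{lemma:bound_kernel_var}), the second through \paperlemma{lemma:bound_si}, and the quadratic term directly; this yields the one-step contraction $1-\eta\lambda_k/(2N_k)$, which is then iterated $t^{(k)}$ times. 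No matrix exponential, Weyl step, or linearization--coupling argument appears. So your first three steps constitute a valid alternative derivation of the decay for the linearized dynamics, but the passage from that to the network at $\w[k,t^{(k)}]$ --- exactly the step you leave as a sketch --- is the missing proof.
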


\begin{proof}\label{proof:ntkfl}
Taking the difference between successive neural network predictions yields 
\begin{equation}
f^{(k, t+1)}(\x_i) -  f^{(k, t)}(\x_i) \\ 
= \frac{1}{\sqrt{n}} \sum_{r=1}^{n} \left[ c_{r} \sigma\left((\vecv[k,t+1]_{r})^{\top} \mathbf{x}_{i}\right) -c_{r} \sigma\left((\vecv[k,t]_{r})^{\top} \mathbf{x}_{i}\right) \right]. \\
\end{equation}
We decompose the difference term to the sum of $d^{\RN{1}}_i$ and $d^{\RN{2}}_i$, based on the set $\sset_i^{(k)}$:  
\begin{subequations}
\begin{align}
    d_i^{\RN{1}} &\triangleq \frac{1}{\sqrt{n}} \sum_{r \notin \sset_i^{(k)}} \left[ c_{r} \sigma \left( (\vecv[k,t+1]_{r})^\top \x_i \right) - c_r \sigma \left( (\vecv[k,t]_{r})^\top \x_i \right) \right], \\
    d_i^{\RN{2}} &\triangleq \frac{1}{\sqrt{n}} \sum_{r \in \sset_i^{(k)}} \left[ c_{r} \sigma \left( (\vecv[k,t+1]_{r})^\top \x_i \right) - c_{r} \sigma \left( (\vecv[k,t]_{r})^\top \x_i \right) \right].
\end{align}
\end{subequations}
Consider the residual term
\begin{subequations}
\begin{align}
    & \normsq{f^{(k,t+ 1)}(\X^{(k)}) - \y^{(k)}} \\
    = &\;  \normsq{f^{(k,t+1)}(\X^{(k)}) - f^{(k,t)}(\X^{(k)}) + f^{(k,t)}(\X^{(k)})  - \y^{(k)}}  \\ 
    = &\;  \normsq{f^{(k,t)}(\X^{(k)}) - \y^{(k)}} + 2 \left \langle \vecd^{\RN{1}} + \vecd^{\RN{2}},  f^{(k,t)}(\X^{(k)})  - \y^{(k)} \right \rangle + \normsq{f^{(k,t+1)}(\X^{(k)}) - f^{(k,t)}(\X^{(k)})}. 
\end{align}
\end{subequations}
We will give upper bounds for the inner product terms $\left \langle \vecd^{\RN{1}}, f^{(k,t)}(\X^{(k)})  - \y^{(k)} \right \rangle$, $\left \langle \vecd^{\RN{2}}, f^{(k,t)}(\X^{(k)})  - \y^{(k)} \right \rangle$, and the difference term  $\normsq{f^{(k,t+1)}(\X^{(k)}) - f^{(k,t)}(\X^{(k)})}$, separately. 
Based on the property of the set $\sset_i^{(k)}$, we have 
\begin{subequations}
\begin{align}
    d_{i}^{\RN{1}} &=-\frac{\eta}{\sqrt{n}} \sum_{r \notin S_{i}} c_r \left \langle\nabla_{\vecv_{r}} L, \mathbf{x}_{i}\right \rangle \indi[k,t]_{i r} \\
    & = -\frac{\eta}{n N_k} \sum_{j=1}^{N_k} \left( f^{(k,t)}(\x_j) - y_{j} \right) \x_j^\top \x_i \sum_{r \notin S_{i}} c_r^2 \, \indi[k,t]_{i r} \indi[k,t]_{j r} \\
    & = -\frac{\eta}{N_k} \sum_{j=1}^{N_k} \left( f^{(k,t)}(\x_j) - y_{j} \right) \left( (\kernel[k,t])_{ij} - (\pkernel[k,t])_{ij} \right), 
\end{align}
\end{subequations}
where  $(\pkernel[k,t])_{ij}$ is defined as 
\begin{equation}
    (\pkernel[k,t])_{ij} \triangleq \frac{1}{n} \x_i^\top \x_j \sum\limits_{r \in \sset_i^{(k)}}^{n}  \indi[k,t]_{ir} \indi[k,t]_{jr}. 
\end{equation}
For the inner product term $\left \langle \vecd^{\RN{1}}, f^{(k,t)}(\X^{(k)})  - \y^{(k)} \right \rangle$, we have 
\begin{equation}
    \left \langle \vecd^{\RN{1}}, f^{(k,t)}(\X^{(k)}) - \y^{(k)} \right \rangle 
    = - \frac{\eta}{N_k} ( f^{(k,t)}(\X^{(k)}) - \y^{(k)} )^\top (\kernel[k,t] - \pkernel[k,t]) (f^{(k,t)}(\X^{(k)}) - \y^{(k)} ). 
\end{equation}
Let $T_1$ and $T_2$ denote the following terms 
\begin{subequations}
\begin{align}
    T_1 & \triangleq -( f^{(k,t)}(\X^{(k)}) - \y^{(k)} )^\top \kernel[k,t] (f^{(k,t)}(\X^{(k)}) - \y^{(k)} ), \\
    T_2 & \triangleq ( f^{(k,t)}(\X^{(k)}) - \y^{(k)} )^\top \pkernel[k,t] (f^{(k,t)}(\X^{(k)}) - \y^{(k)} ). 
\end{align}
\end{subequations}
With probability at least $1-\delta$, $T_1$ can be bounded as:
\begin{subequations}
\begin{align}
    T_1 & = -( f^{(k,t)}(\X^{(k)}) - \y^{(k)} )^\top (\kernel[k,t] - \kernel[k] + \kernel[k] - \kernelinf + \kernelinf ) ( f^{(k,t)}(\X^{(k)}) - \y^{(k)} ) \\ 
    & \leqslant -( f^{(k,t)}(\X^{(k)}) - \y^{(k)} )^\top(\kernel[k,t] - \kernel[k])( f^{(k,t)}(\X^{(k)}) - \y^{(k)} ) \nonumber \\
    & \hspace{9pt} -( f^{(k,t)}(\X^{(k)}) - \y^{(k)} )^\top(\kernel[k] - \kernelinf)( f^{(k,t)}(\X^{(k)}) - \y^{(k)} )
     - \lambda_{k} \normsq{f^{(k,t)}(\X^{(k)}) - \y^{(k)}}  \\ 
    & \overset{\cirone}{\leqslant}  \left( \frac{2 \sqrt{2} N_k R}{\sqrt{\pi} \delta \alpha_{k}} + N_k \sqrt{\frac{\ln \left(2 N_k^2 / \delta\right)}{2 n}} - \lambda_{k} \right) \normsq{f^{(k,t)}(\X^{(k)}) - \y^{(k)}}, \label{eq:T1}
\end{align}
\end{subequations}
where $\cirone$ comes from \paperlemma{lemma:bound_kernel_time} and \paperlemma{lemma:bound_kernel_var}. 
To bound the term $T_2$, consider the $\ell_{2}$ norm of the matrix $\pkernel[k,t]$. 
With probability at least $1-\delta$, we have:
\begin{subequations}
\begin{align}
    \| \pkernel[k,t] \|_2 & \leqslant \| \pkernel[k,t] \|_{\textrm{F}} \\
    & = \left(\sum_{i=1}^{N_k} \sum_{j=1}^{N_k}  \left(\frac{1}{n} \sum_{r\in \sset_i^{(k)}}  \x^\top_{i} \x_{j}  \indi[k,t]_{ir} \indi[k,t]_{jr} \right)^2 \right)^{\frac{1}{2}} \\
    & \leqslant \frac{N_k}{n} |\sset_i^{(k)}| 
    \overset{\cirone}{\leqslant}  \sqrt{\frac{2}{\pi} } \frac{N_k R }{\delta \alpha_{k}}, 
\end{align}
\end{subequations}
where $\cirone$ comes from \paperlemma{lemma:bound_si}. 
Therefore, with probability at least $1-\delta$, we have 
\begin{equation}
    T_2 \leqslant \sqrt{\frac{2}{\pi} } \frac{N_k R }{\delta \alpha_{k}} \normsq{f^{(k,t)}(\X^{(k)}) - \y^{(k)} }.  \label{eq:T2}
\end{equation}
Combine the results of \eqref{eq:T1} and \eqref{eq:T2}: 
\begin{equation}
    \left \langle \vecd^{\RN{1}}, f^{(k,t)}(\X^{(k)}) - \y^{(k)} \right \rangle  \leqslant
    \eta \left( \frac{3 \sqrt{2} R}{ \sqrt{\pi} \delta \alpha_{k}} + \sqrt{\frac{\ln \left(2 N_k^2 / \delta\right)}{2 n}} - \frac{\lambda_{k}}{N_k} \right)   \normsq{f^{(k,t)}(\X^{(k)}) - \y^{(k)} }. \label{eq:bound_inner1}
\end{equation}
For the inner product term $\left \langle \vecd^{\RN{2}}, f^{(k,t)}(\X^{(k)})  - \y^{(k)} \right \rangle$, we first bound $\|\vecd^{\RN{2}}\|_2^2$ as follows:
\begin{subequations}
\begin{align}
    \|\vecd^{\RN{2}}\|_2^2 
    & =  \sum_{i=1}^{N_k} \left( \frac{1}{\sqrt{n}} \sum_{r \in \sset_i^{(k)}} \left[ c_{r} \sigma \left( (\vecv[k,t+1]_{r})^\top \x_i \right) - c_{r} \sigma \left( (\vecv[k,t]_{r})^\top \x_i \right) \right] \right)^2 \\
    & \overset{\cirone}{\leqslant} \frac{\eta^2}{n} \sum_{i=1}^{N_k} |\sset_i^{(k)}| \sum_{r \in \sset_i^{(k)}}  ( c_r \langle \nabla_{\vecv_r} L, \x_i \rangle )^2  \\
    & \overset{\cirtwo}{\leqslant} \frac{\eta^2}{n} \sum_{i=1}^{N_k} |\sset_i^{(k)}| \sum_{r \in \sset_i^{(k)}}  \|\nabla_{\vecv_r} L\|_2^2 \, \|\x_i\|_2^2 \\
    & \leqslant \frac{\eta^2 N_k}{n} |\sset_i^{(k)}|^2 \max_{r\in[n]} \normsq{\nabla_{\vecv_r} L } \\
    & \leqslant \frac{\eta^2 |\sset_i^{(k)}|^2}{n^2 } \normsq{f^{(k,t)}(\X^{(k)}) - \y^{(k)} } \label{eq:bound_d2},
\end{align}
\end{subequations}
where $\cirone$ comes from the Lipschitz continuity of the ReLU function $\sigma(\cdot)$, $\cirtwo$ holds due to Cauchy--Schwartz inequality. 
Plug \eqref{eq:interm_bound} into \eqref{eq:bound_d2}, we have with probability at least $1-\delta$:
\begin{equation}
    \|\vecd^{\RN{2}}\|_2^2 \leqslant \frac{2 \eta^2 R^2 }{\pi \delta^2 \alpha_{k}^2} \normsq{f^{(k,t)}(\X^{(k)}) - \y^{(k)} }. 
\end{equation}
The inner product term $\left \langle \vecd^{\RN{2}}, f^{(k,t)}(\X^{(k)})  - \y^{(k)} \right \rangle$ can be bounded as 
\begin{equation}
    \left \langle \vecd^{\RN{2}}, f^{(k,t)}(\X^{(k)})  - \y^{(k)} \right \rangle \leqslant \frac{\sqrt{2} \eta R}{\sqrt{\pi} \delta \alpha_{k}} \normsq{f^{(k,t)}(\X^{(k)}) - \y^{(k)}}.  \label{eq:bound_inner2}
\end{equation}
Finally, the bound for the difference term is derived as 
\begin{equation}
    \normsq{f^{(k,t+1)}(\X^{(k)}) - f^{(k,t)}(\X^{(k)})} 
    \leqslant \sum_{i=1}^{N_k} \left( \frac{\eta}{\sqrt{n}} \sum_{r=1}^n c_r  \langle\nabla_{\vecv_{r}} L, \mathbf{x}_{i} \rangle  \right)^2 
    \leqslant \eta^2 \normsq{f^{(k,t)}(\X^{(k)}) - \y^{(k)} }. \label{eq:bound_diff_term}
\end{equation}
Combine the results of \eqref{eq:bound_inner1}, \eqref{eq:bound_inner2} and \eqref{eq:bound_diff_term}:
\begin{equation}
    \normsq{f^{(k,t+ 1)}(\X^{(k)}) - \y^{(k)}} \leqslant \left[ 1 + \frac{8 \sqrt{2} \eta R}{\sqrt{\pi} \delta \alpha_{k}} + 2\eta \sqrt{\frac{\ln \left(2 N_k^2 / \delta\right)}{2 n}} -  \frac{2 \eta \lambda_{k}}{N_k} + \eta^2\right] \normsq{f^{(k,t)}(\X^{(k)}) - \y^{(k)}}. 
\end{equation}
Let $R = O\left(\frac{\delta \alpha_{k} \lambda_{k}}{N_k}\right)$, $n = \Omega\left(\frac{ N_k^{2}}{\lambda_{k}^{2}} \ln \frac{ N_k^{2}}{\delta} \right)$, and $\eta = O( \frac{\lambda_{k}}{N_k})$, we have 
\begin{equation}\label{eq:recursive}
    \normsq{f^{(k,t+ 1)}(\X^{(k)}) - \y^{(k)}} \leqslant \left( 1 - \frac{\eta  \lambda_{k}}{2 N_k} \right) \normsq{f^{(k,t)}(\X^{(k)}) - \y^{(k)}}. 
\end{equation}
Invoking the inequality \eqref{eq:recursive} recursively completes the proof.
\end{proof}

\section{Proof of \papertheorem{th:fedavg}}\label{section:proof_fedavg}
\begin{Theorem}
    For FedAvg under Assumptions \ref{assumption:initialization} to \ref{assumption:pdkernel}, 
    let the learning rate $\eta=\order{\frac{\lambda_{k}}{\tau N_k M_k}}$ and 
    the neural network width $n = \Omega\left(\frac{ N_k^{2}}{\lambda_{k}^{2}} \ln \frac{ N_k^{2}}{\delta} \right)$, 
    then with probability at least $1-\delta$, 
    \begin{equation}
        \normsq{f^{(k+1)}(\X^{(k)}) - \y^{(k)}} \leqslant \left( 1 - \frac{\eta \tau \lambda_{k}}{2 N_k M_k}  \right) \normsq{f^{(k)}(\X^{(k)}) - \y^{(k)}}, 
    \end{equation}
    where $\tau$ is the number of local iterations, and $M_k$ is the number of clients in communication round $k$. 
\end{Theorem}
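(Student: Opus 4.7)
The plan is to mirror the two-decomposition strategy used for \papertheorem{th:ntkfl}, but with two essential modifications that reflect the nature of FedAvg: (i) per-client Jacobians are contracted against only the local residuals rather than the global residual, which produces the asymmetric NTK structure of \cite{huang2021fl}; and (ii) each client performs $\tau$ sequential local gradient-descent steps before the server averages, so the kernel drift along the local trajectory must be controlled over $\tau$ inner iterations and then averaged across the $M_k$ participating clients.

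First, I would write the post-round weight explicitly as $\w[k+1]=\frac{1}{M_k}\sum_{m=1}^{M_k}\w[k,\tau]_m$, where each $\w[k,\tau]_m$ is generated by $\tau$ local GD updates on $F_m$ starting from $\w[k]$. Taking the first-order expansion of $f^{(k+1)}(\X^{(k)})$ around $\w[k]$ and using the ReLU form of \eqref{eq:relunet_func}, the one-round change in prediction decomposes, exactly as in the proof of \papertheorem{th:ntkfl}, into a contribution $\vecd^{\RN{1}}$ from neurons whose activation patterns on $\x_i$ are stable throughout the round (indices $r\notin\sset_i^{(k)}$) and a residual $\vecd^{\RN{2}}$ from the flipped neurons ($r\in\sset_i^{(k)}$). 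On the stable part, summing the $\tau$ local steps and then averaging over $M_k$ clients produces the leading term $-\frac{\eta\tau}{N_k M_k}(f^{(k)}(\X^{(k)})-\y^{(k)})^\top \widetilde{\kernel}^{(k)}(f^{(k)}(\X^{(k)})-\y^{(k)})$, where $\widetilde{\kernel}^{(k)}$ is the asymmetric kernel whose $(i,j)$ entry is active only when $\x_i$ and $\x_j$ belong to the same client. The factor $1/M_k$ comes from the server averaging and is the reason the contraction rate in the theorem scales as $\eta\tau\lambda_k/(N_k M_k)$ instead of $\eta\lambda_k/N_k$.

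Next, I would control the three perturbations. (a) The local-trajectory drift $\|\kernel[k,t]-\kernel[k]\|_2$ for $t\le\tau$ is handled by a direct adaptation of \paperlemma{lemma:bound_kernel_time}, with the radius $R$ replaced by its worst-case value over the $\tau$ inner steps; this forces a cap on $R$ of order $\delta\alpha_k\lambda_k/(N_k\tau)$, which is absorbed by the stated learning-rate scaling $\eta=O(\lambda_k/(\tau N_k M_k))$ because $R$ is bounded by $\eta\tau$ times the maximum gradient norm. (b) The finite-width gap $\|\kernel[k]-\kernelinf\|_2$ is bounded via \paperlemma{lemma:bound_kernel_var} and eliminated by the width requirement $n=\Omega(N_k^2 \lambda_k^{-2}\ln(N_k^2/\delta))$. (c) The flipped-neuron contribution $\vecd^{\RN{2}}$ and the quadratic leftover $\|f^{(k+1)}(\X^{(k)})-f^{(k)}(\X^{(k)})\|_2^2$ are handled exactly as in \eqref{eq:bound_inner2} and \eqref{eq:bound_diff_term}, again with the $\tau$ factor absorbed into the stepsize. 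Combining these bounds yields
\begin{equation*}
\normsq{f^{(k+1)}(\X^{(k)})-\y^{(k)}} \leqslant \left(1-\frac{\eta\tau\lambda_k}{2 N_k M_k}+\text{perturbations}\right)\normsq{f^{(k)}(\X^{(k)})-\y^{(k)}},
\end{equation*}
and calibrating $R$, $n$, and $\eta$ as above swallows the perturbations into the $1/2$ factor.

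The hard part will be step (a) together with the asymmetric-kernel bookkeeping: since each of the $\tau$ local gradient steps uses only one client's residuals, the per-step kernel is block-supported on that client, and one has to verify carefully that the \emph{aggregated} kernel obtained after summing over steps and averaging over clients still inherits the positive-definiteness of $\kernelinf$ via Assumption~\ref{assumption:pdkernel}. Concretely, I expect to need a lemma showing that the symmetric envelope of $\widetilde{\kernel}^{(k)}$ has minimum eigenvalue at least $\lambda_k/M_k$ up to the same $O(R/(\delta\alpha_k))$ perturbation used in \paperlemma{lemma:bound_kernel_time}; this is where the $M_k$ in the denominator of the contraction rate is pinned down, and it is also where the proof most visibly departs from the NTK-FL analysis.
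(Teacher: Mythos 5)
Your overall skeleton (decompose the one-round change into stable-neuron and flipped-neuron parts, reuse Lemmas~\ref{lemma:bound_si}--\ref{lemma:bound_kernel_var}, absorb perturbations via the choices of $R$, $n$, $\eta$) matches the paper's proof, but two of your structural claims about the FedAvg-specific part are wrong, and they hide the step that actually carries the difficulty. First, the asymmetric kernel is \emph{not} ``active only when $\x_i$ and $\x_j$ belong to the same client.'' In the paper, $(\asyker[k,u])_{ij}=\frac{1}{n}\x_i^\top\x_j\sum_r \indi[k,0]_{imr}\indi[k,u]_{jmr}$ for $j\in\I_m$: every pair $(i,j)$ is present, and the client structure enters only through which client's locally evolved activations are used on the $j$ index. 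Consequently your closing lemma about the ``symmetric envelope'' having minimum eigenvalue $\lambda_k/M_k$ is neither how the $M_k$ arises nor needed (and it contradicts your own earlier, correct remark that $1/M_k$ comes from the server-averaging prefactor $\eta/M_k$ in the weight update); in the paper the kernel quadratic form keeps the full $\lambda_k$ via $\kernelinf$, and $M_k$ enters only through that prefactor.

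Second, and more importantly, your leading term $-\frac{\eta\tau}{N_kM_k}(f^{(k)}-\y^{(k)})^\top\widetilde{\kernel}^{(k)}(f^{(k)}-\y^{(k)})$ is not what the decomposition gives: the stable-neuron term is $-\frac{\eta}{N_kM_k}\sum_{u=0}^{\tau-1}(f^{(k)}(\X^{(k)})-\y^{(k)})^\top(\asyker[k,u]-\pasyker[k,u])(f^{(k,u)}_{g}(\X^{(k)})-\y^{(k)})$, i.e.\ it pairs the round-start residual with the \emph{locally evolved} residuals $f^{(k,u)}_m$. Your plan never controls this local drift, yet this is the genuinely new ingredient relative to \papertheorem{th:ntkfl}: the paper proves
\begin{equation*}
\norm{f^{(k,u)}_m(\X_m)-\y_m}\leqslant(1+\eta)^{u}\norm{f^{(k)}(\X_m)-\y_m},\qquad
\norm{f^{(k,u)}_{g}(\X^{(k)})-f^{(k)}(\X^{(k)})}\leqslant\bigl[(1+\eta)^{u}-1\bigr]\norm{f^{(k)}(\X^{(k)})-\y^{(k)}},
\end{equation*}
and the second bound forces a cross term $-(f^{(k)}-\y^{(k)})^\top\kernelinf(f^{(k,u)}_{g}-f^{(k)})$ that is bounded using the condition number $\kappa$ of $\kernelinf$; the $(1+\eta)^u$ factors are then expanded to give the $\tau\eta+\frac{\tau(\tau-1)}{2}\eta^2+o(\eta^2)$ coefficients that the learning-rate choice $\eta=O\bigl(\frac{\lambda_k}{\tau N_kM_k}\bigr)$ must absorb. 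Without these residual-drift bounds you cannot replace $f^{(k,u)}_g$ by $f^{(k)}$ in the quadratic form, so the contraction factor $1-\frac{\eta\tau\lambda_k}{2N_kM_k}$ is not established by your argument; by contrast, your worry about kernel drift over the $\tau$ inner steps (your step (a)) is handled by the same $R=O(\delta\alpha\lambda_k/N_k)$ argument as in \paperlemma{lemma:bound_kernel_time} and is not where the proof departs from NTK-FL.
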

\begin{proof}
We first construct a different set of kernel matrices $\{\asyker[k], \asyker[k,\tau]_m\}$ similar to \cite{huang2021fl}. 
Let $\indi[k,u]_{imr} \triangleq \ind[] \{ \langle \vecv[k,u]_{m, r}, \x_{i} \rangle \geqslant 0 \}$, the $(i,j)$th entry of  $\asyker[k,u]_{m}$ and $\asyker[k, u]$ is defined as 
\begin{subequations}
\begin{align}
    (\asyker[k, u]_{m})_{ij} & \triangleq \frac{1}{n} \x^\top_i \x_j \sum_{r=1}^{n} \indi[k,0]_{imr} \indi[k,u]_{jmr}, \\
    (\asyker[k, u])_{ij} & \triangleq (\asyker[k, u]_m)_{ij},\quad \textrm{ if } (\x_{j}, y_j) \in \mathcal{D}_{m}. 
\end{align}
\end{subequations}
Taking the difference between successive terms yields 
\begin{equation}
f^{(k+1)}(\x_i) -  f^{(k)}(\x_i) \\ 
= \frac{1}{\sqrt{n}} \sum_{r=1}^{n} \left[ c_{r} \sigma\left((\vecv[k+1]_{r})^{\top} \mathbf{x}_{i}\right) - c_{r} \sigma\left((\vecv[k]_{r})^{\top} \mathbf{x}_{i}\right) \right]. \\
\end{equation}
We decompose the difference term to the sum of $d^{\RN{1}}_i$ and $d^{\RN{2}}_i$, based on the set $\sset^{(k)}_i$ and its complement: 
\begin{subequations}
\begin{align}
    d_i^{\RN{1}} &\triangleq \frac{1}{\sqrt{n}} \sum_{r \notin \sset^{(k)}_i} \left[ c_{r} \sigma \left( (\vecv[k+1]_{r})^\top \x_i \right) - c_r \sigma \left( (\vecv[k]_{r})^\top \x_i \right) \right], \\
    d_i^{\RN{2}} &\triangleq \frac{1}{\sqrt{n}} \sum_{r \in \sset^{(k)}_i} \left[ c_{r} \sigma \left( (\vecv[k+1]_{r})^\top \x_i \right) - c_{r} \sigma \left( (\vecv[k]_{r})^\top \x_i \right) \right].
\end{align}
\end{subequations}
Consider the residual term
\begin{subequations}
\begin{align}
    & \normsq{f^{(k + 1)}(\X^{(k)}) - \mathbf{y}} \\
    = &\;  \normsq{f^{(k+1)}(\X^{(k)}) - f^{(k)}(\X^{(k)}) + f^{(k)}(\X^{(k)})  - \mathbf{y}}  \\ 
    = &\;  \normsq{f^{(k)}(\X^{(k)}) - \y^{(k)}} + 2 \left \langle \vecd^{\RN{1}} + \vecd^{\RN{2}},  f^{(k)}(\X^{(k)})  - \mathbf{y} \right \rangle + \normsq{f^{(k+1)}(\X^{(k)}) - f^{(k)}(\X^{(k)})}. 
\end{align}
\end{subequations}
We will give upper bounds for the inner product terms $\left \langle \vecd^{\RN{1}}, f^{(k)}(\X^{(k)})  - \mathbf{y} \right \rangle$, 
$\left \langle \vecd^{\RN{2}}, f^{(k)}(\X^{(k)})  - \mathbf{y} \right \rangle$, 
and the difference term  $\normsq{f^{(k+1)}(\X^{(k)}) - f^{(k)}(\X^{(k)})}$, separately. 
For an input $\x \in \mathbb{R}^{d_1}$, let $f^{(k,u)}_m (\x) \triangleq \frac{1}{\sqrt{n}} \sum\limits_{r=1}^{n} c_r \sigma(\langle\vecv[k,u]_{m,r}), \x\rangle) $. 
By the update rule of FedAvg, the relation between the weight vector $\vecv[k]_r$ in successive communication rounds is: 
\begin{subequations}
\begin{align}
    \vecv[k+1]_{r} &= \vecv[k]_{r} - \frac{\eta}{M_k} \sum_{m\in \cset_{k}} \sum_{u=0}^{\tau-1} \nabla L_{\vecv[k,u]_r} \\ 
    & = \vecv[k]_{r} - \frac{\eta c_r}{N_{k} \sqrt{n} M_k } \sum_{m\in \cset_{k}} \sum_{u=0}^{\tau-1} \sum_{j \in \I_m}  (f^{(k,u)}_m(\x_{j}) - y_{j}) \x_{j} \indi[k,u]_{jmr}.  
\end{align}
\end{subequations}
Based on the property of the set $\sset^{(k)}_i$, we have 
\begin{subequations}
\begin{align}
    d_i^{\RN{1}} 
    & = -\frac{1 }{\sqrt{n} } \sum_{m \in \cset_{k}} \sum_{u=0}^{\tau-1} \sum_{r \notin \sset^{(k)}_{i}} c_r \left \langle \vecv[k+1]_r - \vecv[k]_{r}, \mathbf{x}_{i}\right \rangle \indi[k]_{ir} \\
    & = -\frac{\eta }{N_{k} n M_k} \sum_{m \in \cset_{k}} \sum_{u=0}^{\tau-1} \sum_{r \notin \sset^{(k)}_{i}} \sum_{j \in \I_m} (f^{(k,u)}_m(\x_{j}) - y_{j}) \x_i^\top \x_j \indi[k]_{ir} \indi[k,u]_{jmr} \\ 
    & = -\frac{\eta }{N_{k} M_k} \sum_{m \in \cset_{k}} \sum_{u=0}^{\tau-1} \sum_{j \in \I_m} (f^{(k,u)}_m(\x_{j}) - y_{j}) \left[ (\asyker[k,u]_{m})_{ij} -  (\pasyker[k,u]_{m})_{ij} \right]. 
\end{align}
\end{subequations}
For the inner product term $\left \langle \vecd^{\RN{1}}, f^{(k)}(\X^{(k)})  - \mathbf{y} \right \rangle$, we have 
\begin{equation}
    \left \langle \vecd^{\RN{1}}, f^{(k)}(\X^{(k)}) - \y^{(k)} \right \rangle 
    = -\frac{\eta }{N_{k} M_k} \sum_{u=0}^{\tau-1} (f^{(k)}(\X^{(k)}) - \y^{(k)})^\top (\asyker[k,u] -  \pasyker[k,u])   (f^{(k,u)}_m (\X^{(k)}) - \y^{(k)}). 
\end{equation}
Let $T_1$ and $T_2$ denote the following terms 
\begin{subequations}
\begin{align}
    T_1 & \triangleq -( f^{(k)}(\X^{(k)}) - \y^{(k)} )^\top \asyker[k,u] (f^{(k,u)}_{g}(\X^{(k)}) - \y^{(k)} ), \\
    T_2 & \triangleq ( f^{(k)}(\X^{(k)}) - \y^{(k)} )^\top \pasyker[k,u] (f^{(k,u)}_{g}(\X^{(k)}) - \y^{(k)} ), 
\end{align}
\end{subequations}
where $f^{(k,u)}_{g}(\X^{(k)}) \triangleq [f^{(k,u)}_1(\X_1)^\top, \cdots, f^{(k,u)}_{M_k}(\X_{M_k})^\top ]^\top$. 
We are going to bound $T_1$ and $T_2$ separately. 
$T_1$ can be written as: 
\begin{subequations}
    \begin{align}
        T_1 & = -( f^{(k)}(\X^{(k)}) - \y^{(k)} )^\top (\asyker[k,u] - \kernel[k] + \kernel[k] - \kernelinf + \kernelinf ) ( f_{g}^{(k,u)}(\X^{(k)}) - \y^{(k)} ) \\ 
        & = -( f^{(k)}(\X^{(k)}) - \y^{(k)} )^\top(\asyker[k,u] - \kernel[k])( f^{(k,u)}_{g}(\X^{(k)}) - \y^{(k)} ) \nonumber \\
        & \hspace{9pt} -( f^{(k)}(\X^{(k)}) - \y^{(k)} )^\top(\kernel[k] - \kernelinf)( f^{(k,u)}_{g}(\X^{(k)}) - \y^{(k)} ) \nonumber \\
        & \hspace{9pt} -( f^{(k)}(\X^{(k)}) - \y^{(k)} )^\top \kernelinf  ( f^{(k)}(\X^{(k)}) - \y^{(k)} )  \nonumber \\ 
        & \hspace{9pt} -( f^{(k)}(\X^{(k)}) - \y^{(k)} )^\top \kernelinf  ( f^{(k,u)}_{g}(\X^{(k)}) -  f^{(k)}(\X^{(k)}) ).  \label{eq:bound_T1_interm}
    \end{align}
\end{subequations}
First, we bound the norm of $f^{(k,u)}_{g}(\X^{(k)}) - \y^{(k)}$. 
It can be shown that
\begin{subequations}
\begin{align}
    \|f^{(k,u)}_m (\X_m) - \y_m \|_2 
    & = \|f^{(k,u)}_m (\X_m) - f^{(k,u-1)}_m (\X_m) + f^{(k,u-1)}_m (\X_m) - \y_m \|_2 \\
    & \leqslant \|f^{(k,u)}_m (\X_m)- f^{(k,u-1)}_m(\X_m) \|_2 + \| f^{(k,u-1)}_m (\X_m) - \y_m \|_2 \\
    & \overset{\cirone}{\leqslant} (1+\eta) \|f^{(k,u-1)}_m(\X_m) - \y_m \|_2, \label{eq:local_interm}
\end{align}
\end{subequations}
where $\cirone$ holds based on the derivation of \eqref{eq:bound_diff_term}. 
Applying \eqref{eq:local_interm} recursively yields 
\begin{equation}
    \|f^{(k,u)}_m (\X_m) - \y_m \|_2 \leqslant (1+\eta)^{u} \| f^{(k)}(\X_m) - \y_m \|_2.  \label{eq:local_diff}
\end{equation}
The bound for $\|f^{(k,u)}_{g}(\X^{(k)}) - \y^{(k)}\|^2_2$ can thus be derived as 
\begin{subequations}
\begin{align}
    \|f^{(k,u)}_{g}(\X^{(k)}) - \y^{(k)}\|^2_2 
    & = \sum_{i=1}^{N_{k}} \left[f^{(k,u)}_{g}(\x_i) - y_i\right]^2  \\
    & = \sum_{m \in \cset_{k}} \normsq{f^{(k,u)}_m (\X_m) - \y_m} \\
    & \leqslant (1+\eta)^{2u} \normsq{f^{(k)}(\X^{(k)}) - \y^{(k)} }. \label{eq:bound1}
\end{align}
\end{subequations}
Second, following the steps in \paperlemma{lemma:bound_kernel_time}, it can be shown that with probability at least $1-\delta$, 
\begin{equation}
    \| \asyker[k,t] - \kernel[k] \|_2 \leqslant \frac{2 \sqrt{2} N_{k} R}{\sqrt{\pi} \delta \alpha}. \label{eq:bound2}
\end{equation}
We also bound the difference between $f^{(k,u)}_{g}(\X^{(k)})$ and $f^{(k)}(\X^{(k)})$ as follows:  
\begin{subequations}
\begin{align}
    \| f^{(k,u)}_{g}(\X^{(k)}) -  f^{(k)}(\X^{(k)}) \|_2 
    & \overset{\cirone}{\leqslant} \sum_{v=1}^{u} \| f^{(k,v)}_{g}(\X^{(k)}) - f^{(k,v-1)}_{g}(\X^{(k)}) \|_2 \\
    & \overset{\cirtwo}{\leqslant} \sum_{v=1}^{u} \eta \| f^{(k,v-1)}_{g}(\X^{(k)}) - \y^{(k)} \|_2 \\
    & \overset{\cirthree}{\leqslant} \sum_{v=1}^{u} \eta (1+\eta)^{v-1} \| f^{(k)}(\X^{(k)}) - \y^{(k)} \|_2 \\
    & = \left[(1+\eta)^{u} -1 \right]\| f^{(k)}(\X^{(k)}) - \y^{(k)}  \|_2, \label{eq:bound3}
\end{align}
\end{subequations}
where $\cirone$ holds due to triangle inequality, $\cirtwo$ comes from \eqref{eq:bound_diff_term}, $\cirthree$ comes from \eqref{eq:bound1}. 
Plugging  the results from \eqref{eq:bound1}, \eqref{eq:bound2}, and \eqref{eq:bound3} into \eqref{eq:bound_T1_interm}, we have with probability at least $1-\delta$, 
\begin{equation}
    T_1 \leqslant \left[ (1+\eta)^{u} \left( \frac{2 \sqrt{2} N_{k} R}{\sqrt{\pi} \delta \alpha} +  N_{k} \sqrt{\frac{\ln \left(2 N_{k}^2 / \delta\right)}{2 n}} + \kappa \lambda_{k} \right)  
    - (1+ \kappa)\lambda_{k}  \right] \|f^{(k)}(\X^{(k)}) - \y^{(k)}\|_2^2, \label{eq:T1_}
\end{equation}
where $\kappa$ is the condition number of the matrix $\kernelinf$. 
Next, consider the bound for $T_2$. The $\ell_2$ norm of $\pasyker[k,u]$ can be bounded as 
\begin{subequations}
\begin{align}
    \| \pasyker[k,u] \|_2 & \leqslant \| \pasyker[k,u] \|_{\textrm{F}} \\
    & = \left(\sum_{i=1}^{N_{k}} \sum_{m \in \cset_k} \sum_{j \in \I_m}  \left(\frac{1}{n} \sum_{r\in \sset^{(k)}_i}  \x^\top_{i} \x_{j}  \indi[k]_{ir} \indi[k,u]_{jmr}  \right)^2 \right)^{\frac{1}{2}} \\
    & \leqslant \frac{N_{k}}{n} |\sset^{(k)}_i| 
    \overset{\cirone}{\leqslant}  \sqrt{\frac{2}{\pi} } \frac{N_{k} R }{\delta \alpha}, 
\end{align}
\end{subequations}
where $\cirone$ comes from \paperlemma{lemma:bound_si}. 
Therefore, we have with probability at least $1 - \delta$, 
\begin{equation}
    T_2 \leqslant (1+\eta)^u \sqrt{\frac{2}{\pi} } \frac{N_{k} R }{\delta \alpha} \normsq{f^{(k)}(\X^{(k)}) - \y^{(k)} }.  \label{eq:T2_}
\end{equation}
Combine the results of \eqref{eq:T1_} and \eqref{eq:T2_}:
\begin{equation}
\begin{aligned}
    \left \langle \vecd^{\RN{1}}, f^{(k)}(\X^{(k)}) - \y^{(k)} \right \rangle  \leqslant 
    &\; \frac{\tau}{M_k} \Bigg[ \left(\eta + \frac{(\tau-1)}{2}\eta^2 + o(\eta^2) \right) \left( \frac{3 \sqrt{2} R}{ \sqrt{\pi} \delta \alpha} + \sqrt{\frac{\ln \left(\frac{2 N_{k}^2}{\delta} \right)}{2 n}} + \frac{\kappa \lambda_{k}}{N_{k}} \right) \\
    & - \frac{(1+\kappa) \eta \lambda_{k}}{N_{k}}  \Bigg]  \normsq{f^{(k)}(\X^{(k)}) - \y^{(k)} }. \label{eq:bound_inner1_} 
\end{aligned}
\end{equation}
For the inner product term $\left \langle \vecd^{\RN{2}}, f^{(k)}(\X^{(k)})  - \mathbf{y} \right \rangle$, we first bound $\|\vecd^{\RN{2}}\|_2^2$ with probability at least $1-\delta$:
\begin{subequations}
\begin{align}
    \|\vecd^{\RN{2}}\|_2^2 
    & =  \sum_{i=1}^{N_{k}} \left( \frac{1}{\sqrt{n}} \sum_{r \in \sset^{(k)}_i} \left[ c_{r} \sigma \left( (\vecv[k+1]_{r})^\top \x_i \right) - c_{r} \sigma \left( (\vecv[k]_{r})^\top \x_i \right) \right] \right)^2 \\
    & \leqslant \frac{1}{n} \sum_{i=1}^{N_{k}} |\sset^{(k)}_i| \sum_{r \in \sset^{(k)}_i} \left( c_r \langle \vecv[k+1]_{r} - \vecv[k]_{r}, \x_i \rangle \right)^2 \\
    & \leqslant \frac{1}{n} \sum_{i=1}^{N_{k}} |\sset^{(k)}_i| \sum_{r \in \sset^{(k)}_i} \left( \frac{\eta c_r}{N_{k} \sqrt{n} M_k} \sum_{m \in \cset_k} \sum_{u=0}^{\tau-1}  \sum_{j \in \I_m} (f^{(k,u)}_{m}(\x_j) - y_j) \indi[k,u]_{jmr}  \right)^2 \\
    & \leqslant \frac{\eta^2}{N_{k}^2 n^2 M_k^2} \sum_{i=1}^{N_{k}} |\sset^{(k)}_i| \sum_{r \in \sset^{(k)}_i} \left( \sum_{m \in \cset_k} \sum_{u=0}^{\tau-1}  \sum_{j \in \I_m} \left| f^{(k,u)}_{m}(\x_j) - y_j \right| \right)^2 \\
    & \leqslant \frac{\eta^2}{N_{k}^2 n^2 M_k^2} \sum_{i=1}^{N_{k}} |\sset^{(k)}_i| \sum_{r \in \sset^{(k)}_i}  \left( \sum_{m \in \cset_k} \sum_{u=0}^{\tau-1} |\I_m| \left\| f^{(k,u)}_{m}(\X_m) - \y_m \right\|_2 \right)^2.
\end{align}
\end{subequations}
By apply the results from previous steps, we have 
\begin{subequations}
\begin{align}
    \|\vecd^{\RN{2}}\|_2^2 
    & \overset{\cirone}{\leqslant} \frac{\eta^2}{N_{k}^2 n^2 M_k^2} \sum_{i=1}^{N_{k}} |\sset^{(k)}_i| \sum_{r \in \sset^{(k)}_i} \left( \sum_{m \in \cset_k} \sum_{u=0}^{\tau-1} (1+\eta)^u |\I_m| \left\| f^{(k)}(\X_m) - \y_m \right\|_2 \right)^2 \\
    & \overset{\cirtwo}{\leqslant} \frac{1}{N_{k}^2 n^2 M_k^2} \sum_{i=1}^{N_{k}} |\sset^{(k)}_i| \sum_{r \in \sset^{(k)}_i} \left(\sum_{m \in \cset_k} \left((1+\eta)^\tau - 1\right) |\I_m| \left\| f^{(k)}(\X_m) - \y_m \right\|_1 \right)^2 \\
    & \overset{\cirthree}{\leqslant} \frac{1}{N_{k} n^2 M_k^2} \sum_{i=1}^{N_{k}} |\sset^{(k)}_i| \sum_{r \in \sset^{(k)}_i} \left( \left((1+\eta)^\tau - 1\right) \left\| f^{(k)}(\X^{(k)}) - \y^{(k)} \right\|_2 \right)^2 \\
    & \overset{\cirfour}{\leqslant} \frac{2 R^2}{ \pi \delta^2 \alpha^2 M_k^2} \left(\tau \eta + \frac{\tau(\tau-1)}{2} \eta^2 + o(\eta^2) \right)^2 \normsq{f^{(k)}(\X^{(k)}) - \y^{(k)} }. 
\end{align}
\end{subequations}
where $\cirone$ comes from \eqref{eq:local_diff}, 
$\cirtwo$ holds due to $\|\veca\|_1 \leqslant \|\veca\|_2$, 
$\cirthree$ holds due to $\|\veca\|_1 \leqslant \sqrt{\dim(\veca)}\|\veca\|_2$,  
$\cirfour$ is from \paperlemma{lemma:bound_si}. 
With probability at least $1-\delta$, the inner product term can thus be bounded as
\begin{equation}
    \left \langle \vecd^{\RN{2}}, f^{(k)}(\X^{(k)})  - \mathbf{y} \right \rangle \leqslant  
    \frac{ \sqrt{2} \tau R}{\sqrt{\pi} \delta \alpha M_k} \left(\eta + \frac{(\tau-1)}{2}\eta^2 + o(\eta^2) \right)  \normsq{f^{(k)}(\X^{(k)}) - \y^{(k)} }. \label{eq:bound_inner2_}
\end{equation}
The bound for the difference term is derived as 
\begin{subequations}
\begin{align}
    \normsq{f^{(k+1)}(\X^{(k)}) - f^{(k)}(\X^{(k)})}
    & \leqslant \sum_{i=1}^{N_{k}} \left( \frac{\eta}{\sqrt{n}} \sum_{r=1}^n c_r  \langle \vecv[k+1]_r - \vecv[k]_r  , \mathbf{x}_{i} \rangle  \right)^2 \\
    & \leqslant \frac{1}{M_k^2} \left( \tau\eta + \frac{\tau(\tau-1)}{2} \eta^2 + o(\eta^2) \right)^2 \normsq{f^{(k)}(\X^{(k)}) - \y^{(k)} }. \label{eq:bound_diff_term_}
\end{align}
\end{subequations}
Combine the results of \eqref{eq:bound_inner1_}, \eqref{eq:bound_inner2_} and \eqref{eq:bound_diff_term_}:
\begin{equation}
\begin{aligned}
    \normsq{f^{(k+1)}(\X^{(k)}) - \y^{(k)}} \leqslant 
    &\; \Bigg\{ 1 + \frac{2\eta\tau}{M_k} \Bigg[ \left( \frac{4 \sqrt{2} R}{ \sqrt{\pi} \delta \alpha} + \sqrt{\frac{\ln \left(\frac{2 N_{k}^2}{\delta} \right)}{2 n}} + \frac{\kappa \lambda_{k}}{N_{k}} \right)  \\ 
    & - \frac{(1+\kappa)\lambda_{k}}{N_{k}} \Bigg] 
    +  \frac{\eta^2 \tau^2 }{M_k^2}  + o(\eta^2) \Bigg\} \normsq{f^{(k)}(\X^{(k)}) - \y^{(k)} }. 
\end{aligned}
\end{equation}
Let $R = O\left(\frac{\delta \alpha \lambda_{k}}{N_{k}}\right)$, $n = \Omega\left(\frac{ N_{k}^{2}}{\lambda_{k}^{2}} \ln \frac{N_{k}^{2}}{\delta} \right)$, and $\eta = O\left( \frac{\lambda_{k}}{\tau N_{k} M_k}\right)$, we have 
\begin{subequations}
\begin{align}
    \normsq{f^{(k+1)}(\X^{(k)}) - \y^{(k)}} \leqslant \left( 1 - \frac{\eta \tau \lambda_{k}}{2 N_{k} M_k}  \right) \normsq{f^{(k+1)}(\X^{(k)}) - \y^{(k)} }. 
\end{align}
\end{subequations}
\end{proof}

\end{document}